\documentclass{article}

\PassOptionsToPackage{numbers, compress}{natbib}

 \usepackage[preprint]{neurips_2026}

\usepackage[utf8]{inputenc} 
\usepackage[T1]{fontenc}    
\usepackage{hyperref}       
\usepackage{url}            
\usepackage{booktabs}       
\usepackage{amsfonts}       
\usepackage{nicefrac}       
\usepackage{microtype}      
\usepackage[dvipsnames]{xcolor}
\usepackage{amsmath}
\usepackage{amssymb}
\usepackage{mathtools}
\usepackage{amsthm}
\usepackage{algorithm}
\usepackage{algorithmicx}
\usepackage{underscore}
\usepackage{algpseudocode}
\usepackage{makecell}
\usepackage{colortbl}
\usepackage{multirow}
\usepackage{graphicx}
\usepackage{wrapfig}
\usepackage{subcaption}
\usepackage{textcomp}
\usepackage{siunitx}
\usepackage{soul}

\newcommand{\ttmath}{%
  \everymath{%
    {\scriptstyle\mathtt{}}%
    {\scriptscriptstyle\mathtt{}}%
    \mathtt{\xdef\tmp{\fam\the\fam\relax}\aftergroup\tmp}}}
\newcommand{\val}[1]{\bgroup\ttmath\texttt{#1}\egroup}
\newcommand{\ttp}[2]{#1 $\cdot$ 10$^{#2}$}
\newcommand{\ttpstd}[3]{(#1 {\tiny$\pm$#2}) $\cdot$ 10$^{#3}$}
\newcommand{\mest}[2]{#1 {\tiny $\pm$#2}}

\setul{0.2ex}{0.2ex}
\setulcolor{NavyBlue}
\definecolor{highlightblue}{RGB}{77, 148, 255}

\title{Native Extrapolation Awareness in Flow-Based Conditional Generation}

{
    \small
\author{%
  Constantinos Tsakonas \\
  Inria, Université de Lorraine, CNRS, Loria \\
  Nancy, France\\
  \texttt{konstantinos.tsakonas@inria.fr} \\
  \And
  Serena Ivaldi \\
  Inria, Université de Lorraine, CNRS, Loria \\
  Nancy, France\\
  \texttt{serena.ivaldi@inria.fr} \\
  \And
  Jean-Baptiste Mouret \\
  Inria, Université de Lorraine, CNRS, Loria \\
  Nancy, France\\
  \texttt{jean-baptiste.mouret@inria.fr} \\
}
}

\theoremstyle{plain}
\newtheorem{theorem}{Theorem}[section]
\newtheorem{proposition}[theorem]{Proposition}

\theoremstyle{definition}

\theoremstyle{remark}
\newtheorem{remark}[theorem]{Remark}
\newcommand{\algo}{Diverging Flows}
\newcommand{\algotext}[1]{\texttt{#1}}
\newcommand{\algosmall}{DiFlo}
\newcommand{\algoours}[1]{\algotext{\color{NavyBlue}{{\algosmall} #1}}}

\begin{document}

\maketitle

\begin{abstract}
The ability of Flow Matching (FM) to model complex conditional distributions has established it as the state-of-the-art for prediction tasks (e.g., robotics, weather forecasting). However, deployment in safety-critical settings is hindered by a critical extrapolation hazard: driven by smoothness biases, flow models yield plausible outputs even for off-manifold conditions, resulting in silent failures indistinguishable from valid predictions. In this work, we introduce \emph{Diverging Flows}, a novel approach that enables a single model to simultaneously perform conditional generation and native extrapolation detection by structurally enforcing inefficient transport for off-manifold inputs. We evaluate our method on synthetic manifolds, cross-domain style transfer, and weather temperature forecasting, demonstrating that it achieves effective detection of extrapolations without compromising predictive fidelity or inference latency. These results establish \emph{Diverging Flows} as a robust solution for trustworthy flow models, paving the way for reliable deployment in domains such as medicine, robotics, and climate science.
\end{abstract}

\section{Introduction}\label{sec:introduction}

Deep neural networks are increasingly outperforming traditional predictors in safety-sensitive domains, for
instance replacing Model Predictive Control (MPC) in robotics~\citep{levine2016end, miki2022learning,
kaufmann2023champion, hwangbo2019learning, akkaya2019solving} or physics-based modeling in weather
forecasting~\citep{lam2023graphcast, bi2023accurate, kochkov2024neural}. Their deployment in production is
nonetheless hindered by \emph{silent extrapolation}: under anomalous or out-of-distribution (OOD) conditions,
deep models do not diverge or abstain but produce predictions that remain structured and downstream-valid,
masking failure as nominal behavior. The consequences can be catastrophic. In the fatal 2018 Tempe
accident~\citep{ntsb2019uber}, the perception stack repeatedly oscillated between different classification labels for a
cyclist walking a bike, and the vehicle continued on its trajectory rather than triggering an emergency stop.
In climate science, FourCastNet trained without Category 3--5 tropical cyclones cannot forecast unseen
Category 5 events: it reverts to its learned distribution and predicts mild conditions while a catastrophe
approaches~\citep{sun2025grayswan, pasche2025validating}, instead of flagging the input as outside its
competence and deferring to a physics-based model.

These failures directly follow from how these models are trained. 
Standard supervised and generative models are designed to interpolate within
the training distribution and offer no guarantees beyond it; a model must therefore not be trusted on inputs for
which it was not trained~\citep{extrapolation_sage}, making extrapolation detection a prerequisite for accountable
use~\citep{lahoti2023responsible}. This requirement is increasingly codified in regulation: the EU AI Act mandates
uncertainty estimation for compliance and transparency in high-risk
systems~\citep{valdenegro2024dilemma, herrera2025responsible}.

\begin{wrapfigure}[24]{r}{0.44\textwidth}
     \centering
     \includegraphics[width=0.75\linewidth]{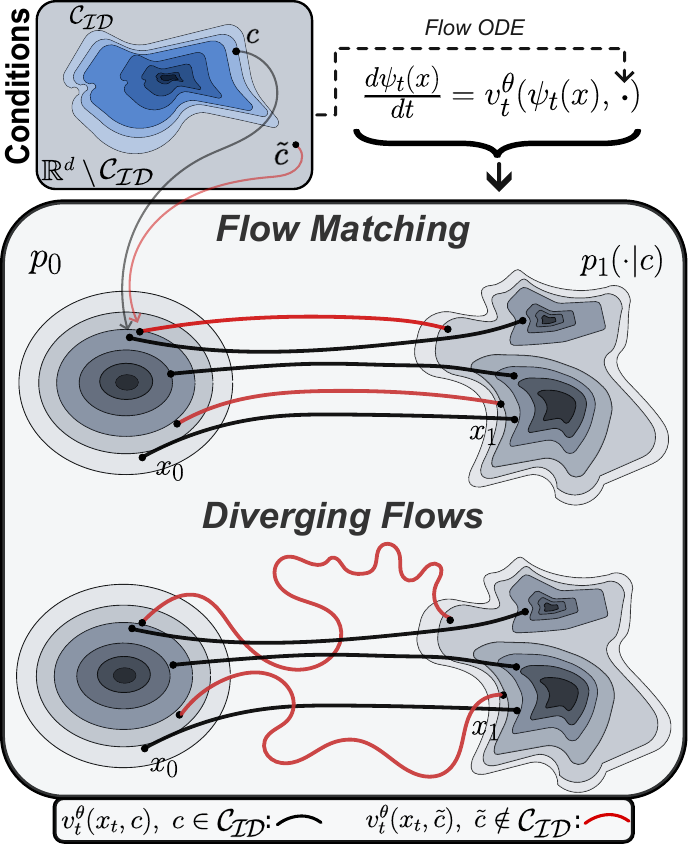}
    \captionsetup{font=small}
    \caption{\textbf{Conceptual Overview.} (Top) Standard FM produces a smooth flow even for off-manifold conditions (red), silently converging to a plausible output. (Bottom) {\algo} structurally enforces transport inefficiency: valid conditions ($c$) induce near-optimal geodesics, while off-manifold queries ($\tilde{c}$) trigger detectable divergence.}
     \label{fig:concept_fig}
\end{wrapfigure}
One might hope that the recent shift toward probabilistic regression would mitigate this issue. Because complex
physical dynamics are inherently stochastic, deterministic predictors have been largely superseded by conditional
generative models, specifically Diffusion~\citep{sohl2015deep, ho2020denoising} and Flow Matching
(FM)~\citep{lipman2022flow, tong2023improving}, which now set the state of the art for these tasks. Optimal
Transport FM (OT-FM) is particularly favored because regressing the model onto straight conditional paths between
source and target distributions yields efficient, numerically stable training targets. OT-FM and its diffusion
counterparts are consequently displacing prior methods in robotics~\citep{chi2023diffusion, black2024pi_0,
rouxel2024flow, rouxel2025extremum}, weather forecasting~\citep{chen2024flow, price2025probabilistic}, and
structure-based drug design~\citep{corso2023diffdock}, among others.

Modeling the conditional distribution, however, does not address whether the conditioning input itself is one the
model should be predicting on. The structural properties that make FM efficient make it a near-perfect silent
extrapolator. Driven by inductive biases toward spectral smoothness~\citep{rahaman2019spectral}, the learned
neural vector fields construct smooth paths between almost any input and output. Queried with physically
impossible or semantically invalid off-manifold conditions, the model neither degrades nor diverges; it transports
the source distribution to a plausible target, by design. While useful in creative media
synthesis~\citep{mariani2024multisource, guerreiro2024layoutflow, kim2024generalized, ki2025float}, this behavior
is not acceptable in safety-sensitive regression.

Standard post-hoc safety checks fail to resolve this silent failure. The ``likelihood paradox'' demonstrates that
generative models often assign high likelihood to simple off-manifold inputs~\citep{nalisnick2018deep, serra2019input},
rendering statistical density an unreliable proxy for validity. Alternatively, training a separate classifier or one-class detector~\citep{yajie2023survey, chalapathy2019deep} doubles inference cost and decouples detection from the primary dynamics, while outlier exposure~\citep{hendrycks2018deep} relies on a representative set of ``unknowns'' that is rarely available.

Our starting point is a property of continuous-time conditional modeling: the target probability path
$p_t(x|c)$ only needs to accurately capture transport dynamics for in-distribution conditions. Because the vector field
is largely unconstrained outside the training support, it presents a unique opportunity to intrinsically encode
information about the condition distribution. The same property that makes OT-FM efficient---straight conditional
paths---can be repurposed as a natural signal of validity. We show that an FM model can be trained to exhibit a
\emph{geometric phase transition}, where valid conditions yield near-optimal paths while off-manifold queries
structurally break the vector field. This demonstrates that detection can emerge directly from the geometry of the
generative process, letting the model signal its limits through path turbulence rather than relying on an external
statistical score, closely aligning with principles of conservative distribution learning~\citep{ma2021conservative,
zhengdao2024conservative, lubold2022formal}.

To practically realize this geometric phase transition, our contributions are as follows:
\textbf{(i)} We provide a theoretical characterization of silent extrapolation, formalizing how spectral smoothness
artificially enforces near-optimal transport on off-manifold conditions;
\textbf{(ii)} We introduce {\algo}, a contrastive training objective that structurally breaks this smoothness by enforcing transport inefficiency for invalid inputs;
\textbf{(iii)} We derive a geometric detection criterion that extracts an extrapolation score directly from trajectory turbulence during inference; and
\textbf{(iv)} We extensively evaluate our framework across synthetic manifolds, physical weather forecasting, and semantic style transfer, demonstrating near-perfect extrapolation awareness with negligible inference overhead.

\section{Related Work}

\subsection{Generative Models for Predictive Tasks}
In robotics, FM and Diffusion models have become a successful alternative for visuomotor control policies,
learning to map observations to precise action trajectories~\cite{chi2023diffusion, black2024pi_0,
rouxel2024flow}. In computational biology, these models have been successfully applied to protein backbone
design, regressing 3D molecular structures from sequence conditioning~\citep{jing2023alphafold, bose2023se}. Similarly,
in earth sciences and healthcare, FM and diffusion models are deployed for global weather
forecasting~\citep{chen2024flow, price2025probabilistic, andrae2025continuous, chan2024estimating}, clinical time-series
modeling~\citep{zhang2024trajectory}, as well as, general time-series forecasting~\citep{kollovieh2024flow,
kollovieh2023predict}. However, while these works demonstrate the efficacy of generative models for high-fidelity
regression, they typically prioritize in-distribution performance. They lack intrinsic mechanisms to flag off-manifold
queries, relying entirely on the model's ability to extrapolate, which poses severe risks in safety-critical deployments.

\subsection{Off-Manifold Detection with Generative Models}
While extensive literature exists for OOD detection in discriminative models~\citep{liang2017enhancing, sun2022out,
Li_2023_CVPR}, addressing this challenge in conditional generative models remains underexplored. Prior unsupervised
efforts primarily rely on likelihood criteria or reconstruction scores. However, likelihood-based approaches are not
sufficient~\citep{nalisnick2018deep, hendrycks2018deep}. Attempts to mitigate this via input complexity
corrections~\citep{serra2019input} or generative ensembles~\citep{choi2018waic} have met with limited success, often
functioning as computationally expensive post-hoc solutions than tackling the root cause.

Similarly, reconstruction-based methods using autoencoders or diffusion models~\citep{liu2023unsupervised,
graham2023denoising} detect anomalies based on restoration errors. While effective as solely OOD detector, these strategies
operate on the \textit{output} space, assessing generation quality rather than validating the conditioning input itself,
and typically serve as expensive external wrappers rather than integrated safety mechanisms.

Lastly, DiffPath~\citep{heng2024out} proposes a passive detection strategy, meaning that a second model is required for
the predictive task, based on analyzing the dynamics of the reverse
generation process. However, the applicability of this framework to our setting is restricted by two structural
differences. First, it is designed for unconditional models and has not been extended to conditional
generation. Consequently, it inherently lacks a mechanism to explicitly evaluate the validity of the conditioning input.
Second, DiffPath relies on the specific curvature of diffusion SDE trajectories. It is incompatible with FM, where
trajectories are inherently smooth, erasing the geometric signals DiffPath requires.

\section{Problem Formulation and Motivation}\label{motivation}

\subsection{Problem Formulation}

We define the task of \emph{Extrapolation-Aware Probabilistic Regression} as learning a mapping $f_\theta: \mathcal{C}
\to \mathcal{P}(\mathcal{X}) \times \mathbb{R}$ that outputs a predictive distribution and an extrapolation indicator. Thus, for an input $c$:
\begin{equation}
    f_\theta(c) = \left( p_\theta(x|c), \ S_\theta(c) \right)
    \label{eq:safety_mapping}
\end{equation}
where $p_\theta(\cdot|c)$ approximates the conditional data density $p_{data}(x|c)$, and $S_\theta(c)$ serves as an extrapolation score. To ensure safety, this tuple must satisfy two critical properties: (i) \emph{Fidelity}, where for in-distribution inputs
($c \in \mathcal{M}$), $p_\theta$ minimizes the divergence from the true conditional distribution; and (ii)
\emph{Trust}, where the score acts as a discriminator for manifold membership, effectively separating valid from invalid
inputs such that $S_\theta(\tilde{c}) \gg S_\theta(c)$ for any $\tilde{c} \notin \mathcal{M}$ and $c \in \mathcal{M}$.

\subsection{Flow Matching}

We realize the predictive component of our formulation using the FM framework~\citep{lipman2022flow}. 
While we frame our analysis in terms of probabilistic regression, this formulation is mathematically
general and naturally extends to \textbf{conditional generation tasks}. The generative process in conditional generation is governed by the flow ODE:
\begin{equation}
    \frac{d\psi_t(x)}{dt} = v_t^\theta(\psi_t(x), c),
    \label{eq:flow_ode}
\end{equation}
where $t \in [0, 1]$, and $v_t^\theta$ is a neural vector field that pushes a noise prior $p_0$ to the target density
$p_1(\cdot|c)$. This probabilistic formulation naturally handles aleatoric uncertainty arising from inherent data noise.
By sampling $N$ trajectories $\{\psi_1^{(i)}(c)\}_{i=1}^N \sim p_\theta(\cdot|c)$ generated by solving
Eq.~\ref{eq:flow_ode}, we approximate the conditional expectation to obtain a robust prediction $\hat{x}_1 =
\mathbb{E}_{p_\theta}[x_1|c]$. While this Monte Carlo integration filters stochastic noise, it fails to ensure model trustworthiness.
For an off-manifold condition $\tilde{c}$ disjoint from the training support, the FM continues to generate a
coherent, low-variance predictive distribution, yielding a plausible but extrapolated prediction $\hat{x}_1$, rendering
it insufficient for safety-sensitive applications.


\section{\algo}\label{methodology}

\subsection{The Extrapolation Problem}\label{sec:extrapolation_problem}

Let $\mathcal{X} = \mathbb{R}^d$ denote the data space and $\mathcal{C} = \mathbb{R}^k$ the conditioning space. We
assume valid conditions lie on a lower-dimensional manifold $\mathcal{M} \subset \mathcal{C}$, while its complement
$\mathcal{M}^c = \mathbb{R}^k \setminus \mathcal{M}$ represents the off-manifold regime. Optimal Transport theory establishes that the most efficient probability path follows the Euclidean geodesic $u_t^{OT}(x_t|x_1) = x_1 - x_0$. We define the \emph{Transport Energy Excess}, $\mathcal{E}(v^\theta; c)$, as the expected deviation of the learned vector field from this optimal geodesic:
\begin{equation}
    \mathcal{E}(v^\theta; c) = \mathbb{E}_{t, x_t} \left[ ||v_t^\theta(x_t, c) - u_t^{OT}(x_t|x_1)||_2^2 \right].
    \label{eq:energy_functional}
\end{equation}

Minimizing $\mathcal{E}(v^\theta; c)$ is mathematically equivalent to the standard OT-FM objective, $\mathcal{L}_\algotext{FM}$ \citep{lipman2022flow}. Ideally, a safe conditional predictor would exhibit a \emph{geometric phase transition}: minimizing $\mathcal{E}(v^\theta; c) \to 0$ for $c \in \mathcal{M}$ (yielding high fidelity) while maximizing $\mathcal{E}(v^\theta; \tilde{c}) \gg 0$ for $\tilde{c} \in \mathcal{M}^c$ (detectable divergence).

However, achieving this phase transition is fundamentally hindered by the structural regularity required of neural ODEs.
For the solution $\psi_t$ to be unique, the learned vector field must be locally Lipschitz
continuous\citep{coddington1955theory, lipman2024flow}. This mathematical constraint creates an \emph{extrapolation hazard}, which we formalize below:

\begin{proposition}[Lipschitz Continuity of Transport Energy]
\label{prop:lipschitz_energy}
Let $v_t^\theta(x, c)$ be a continuous neural vector field, uniformly bounded by $M$, that is locally Lipschitz
    continuous with respect to the conditioning variable $c$ with constant $K$. Let $\mathcal{E}(c) = \mathbb{E}_{t,
    x_t}[||v_t^\theta(x_t, c) - u^{OT}||_2^2]$ be the Transport Energy Excess of the vector field relative to a
    constant optimal transport velocity $u^{OT}$. Then, $\mathcal{E}(c)$ is locally Lipschitz continuous with a constant $L \le
    2K(M + ||u^{OT}||_2)$ (Proof in Appendix \ref{proof:lipschitz_energy}).
\end{proposition}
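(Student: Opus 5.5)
The plan is to compare $\mathcal{E}(c_1)$ and $\mathcal{E}(c_2)$ for two conditions $c_1, c_2$ in a neighbourhood where the Lipschitz bound on $v_t^\theta$ holds, working pointwise inside the expectation. Write $a = v_t^\theta(x_t, c_1) - u^{OT}$ and $b = v_t^\theta(x_t, c_2) - u^{OT}$. The central identity is the difference-of-squares factorization
\begin{equation*}
\|a\|_2^2 - \|b\|_2^2 = \langle a - b,\, a + b\rangle ,
\end{equation*}
where $a - b = v_t^\theta(x_t,c_1) - v_t^\theta(x_t,c_2)$, so the constant $u^{OT}$ cancels. Cauchy--Schwarz then gives $\bigl|\|a\|^2 - \|b\|^2\bigr| \le \|a-b\|_2\,\|a+b\|_2$.

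Each factor is bounded separately. By the local Lipschitz assumption in $c$, which I take to hold uniformly in $(t,x)$ with constant $K$, we get $\|a-b\|_2 \le K\|c_1-c_2\|_2$. By the triangle inequality and the uniform bound $\|v_t^\theta\|_2\le M$, we get $\|a\|_2 \le M + \|u^{OT}\|_2$, and likewise for $b$. Hence $\|a+b\|_2 \le 2(M+\|u^{OT}\|_2)$, and the pointwise difference is at most $2K(M+\|u^{OT}\|_2)\|c_1-c_2\|_2$.

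Finally I pass this through the expectation over $(t, x_t)$. The sampling distribution of $(t,x_t)$ does not depend on $c$, as with the fixed prior-and-time sampling implicit in the statement. Since the integrands are bounded by $(M+\|u^{OT}\|)^2$, both expectations are finite. So $|\mathcal{E}(c_1)-\mathcal{E}(c_2)| \le \mathbb{E}\bigl|\|a\|^2-\|b\|^2\bigr| \le 2K(M+\|u^{OT}\|_2)\|c_1-c_2\|_2$, which gives $L \le 2K(M+\|u^{OT}\|_2)$.

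The main obstacle is making precise the uniformity hidden in the assumptions rather than any computation. The Lipschitz constant $K$ must hold uniformly over $(t,x)$ in the support of the sampling measure. If the measure of $x_t$ depends on $c$ through $x_1\sim p(\cdot|c)$, an extra term arises. If $u^{OT}$ is not constant, I would replace $\|u^{OT}\|_2$ by its essential supremum. I would state these as standing interpretations of the hypotheses, which is exactly what the word ``constant'' in the statement supplies.
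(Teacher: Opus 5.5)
Your proof is correct and follows essentially the same route as the paper's: the difference-of-squares identity $\|a\|_2^2-\|b\|_2^2=\langle a-b,\,a+b\rangle$, then Cauchy--Schwarz, then the Lipschitz bound on $a-b$ and the uniform bound $M$ on $a+b$, and finally pulling the constant out of the expectation. Your remarks on uniformity are hypotheses the paper uses only tacitly: that $K$ holds uniformly in $(t,x)$, that the law of $(t,x_t)$ does not depend on $c$, and that $u^{OT}$ is constant or essentially bounded.
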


It is well-established that the implicit regularization of deep neural networks (e.g., spectral bias) favors the
learning of smooth, low-frequency functions \citep{rahaman2019spectral}. This inductive bias constrains the
Lipschitz constant $K$ of the vector field to remain small. As demonstrated by Proposition \ref{prop:lipschitz_energy}, a bounded $K$ mathematically forces the transport energy landscape to remain flat (i.e., bounded by a correspondingly small $L$). Because standard training explicitly drives $\mathcal{E}(v^\theta; c) \approx 0$ for valid data, this tight continuity
bound guarantees that for any proximate off-manifold condition $\tilde{c} \in \mathcal{M}^c$, the transport energy
cannot increase sufficiently to trigger an anomaly signal, since $|\mathcal{E}(v^\theta; \tilde{c}) - \mathcal{E}(v^\theta; c)| \leq L \cdot ||\tilde{c} - c||_2 \approx 0.$

This dynamic establishes the structural root of the extrapolation problem: without active geometric regularization, the
inherent smoothness of the neural vector field fundamentally masks invalid inputs. The model defaults to the simplest,
most efficient path regardless of validity, resulting in silent extrapolations that perfectly mimic valid flows, without
signaling any uncertainty.

\subsection{Active Geometric Divergence via Vector Field Margins}
\label{sec:active_geometric_divergence}

\begin{figure}[t]
    \centering
    \includegraphics[width=0.87\textwidth]{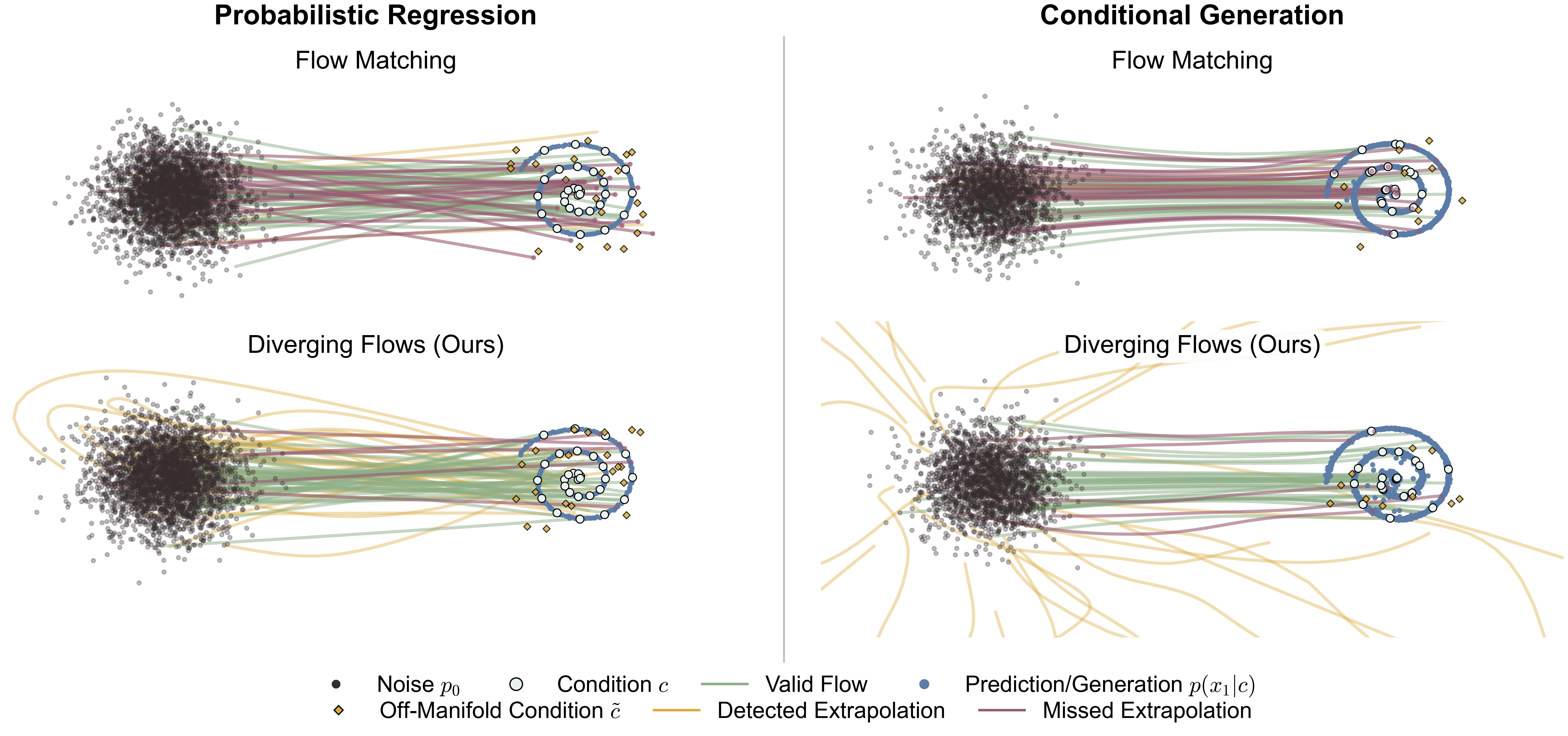}
    \captionsetup{font=small} 
    \caption{\textbf{Vector Field Dynamics on Synthetic Manifolds.} Probabilistic Regression (left, 10-step horizon)
    and Conditional Generation (right). (Top) Standard FM forces smooth convergence for all inputs, causing silent
    hallucinations. (Bottom) {\algo} enforces a conservative field: valid inputs (green) follow optimal geodesics, while
    off-manifold queries (orange) are forced into detectable divergent flows.}\label{fig:2d_spiral}
\end{figure}

To counteract the implicit smoothness bias described in Section~\ref{sec:extrapolation_problem}, we must enforce inefficient transport for off-manifold inputs. We achieve this by projecting contrastive boundaries directly into the dynamic space of vector fields. 

We treat the optimal transport geodesic $u_t^{OT}$ as a \textit{geometric anchor}. For a valid data-condition pair $(x_1, c)$, standard Flow Matching drives the predicted velocity $v^\theta_t(x_t, c)$ toward this anchor. However, for an off-manifold condition $\tilde{c} \in \mathcal{M}^c$, we apply a targeted boundary objective to induce a controlled violation of the Lipschitz continuity.

We decompose this geometric deviation into two orthogonal components. First, we define a kinetic energy margin
($\mathcal{L}_{\algotext{repel}}$) that bounds the magnitude of the off-manifold velocity:
\begin{equation}
    \mathcal{L}_{\algotext{repel}} = \Phi\Big( ||u_t^{OT} - v^\theta_t(x_t, c)||_2 - ||u_t^{OT} - v^\theta_t(x_t, \tilde{c})||_2 + m_r \Big)
    \label{eq:l_repel}
\end{equation}
This term explicitly forces the off-manifold flow to violently accelerate or decelerate relative to the optimal transport speed. By breaking the constant-velocity property, we ensure the transport becomes energetically inefficient.

Second, we define a geodesic alignment margin ($\mathcal{L}_{\algotext{curve}}$) to penalize the directional similarity of the flows:
\begin{equation}
    \mathcal{L}_{\algotext{curve}} = \Phi\Big( d_{\cos}(u_t^{OT}, v^\theta_t(x_t, c)) - d_{\cos}(u_t^{OT}, v^\theta_t(x_t, \tilde{c})) + m_c \Big)
    \label{eq:l_curve}
\end{equation}
where $d_{\cos}(a,b) = 1 - \frac{a \cdot b}{||a|| ||b||}$ measures the angular divergence. This penalty actively steers
the vector field orthogonally away from the geodesic, ensuring that even if the network attempts to maintain the correct speed, it is forced to travel in the wrong direction.

The two hyperparameters $m_r$ and $m_c$ establish energetic and directional constraints around the
valid data manifold. The function $\Phi: \mathbb{R} \to \mathbb{R}^+$ is a monotonically increasing penalty, which we
instantiate as a hinge projection $\Phi(z) = \max(0, z)$. This hard-margin formulation is essential for optimization
stability; once the off-manifold velocity deviates sufficiently to satisfy the margin, the contrastive gradient
vanishes. This bounded penalty promotes geometric divergence without triggering unbounded vector field explosion,
thereby preserving the structural integrity of on-manifold flows. The final {\algo} ({\algosmall}) objective integrates the baseline regression with these dynamic regularizers:
\begin{equation}
    \mathcal{L}_{\algotext{\algosmall}} = \mathbb{E}_{t, x_t} \left[ ||v^\theta_t(x_t, c) - u_t^{OT}||_2^2 \right] + \lambda
    \mathcal{L}_{\algotext{repel}} + \beta \mathcal{L}_{\algotext{curve}}
    \label{eq:df_loss}
\end{equation}

Crucially, the transport energy excess induced by Eq.~\ref{eq:df_loss} cannot arbitrarily dissipate; it must manifest as trajectory curvature. We formalize this geometric bound as follows:

\begin{theorem}[Geometric Divergence Guarantee]
\label{thm:realized_energy_curvature}
Let $\psi: [0,1] \times \mathbb{R}^d \rightarrow \mathbb{R}^d$ be a flow map governed by the conditional velocity field
    $\frac{d\psi_t(x)}{dt} = v_t^\theta(\psi_t(x), \tilde{c})$, mapping an initial state $x_0 \sim p_0$ to a realized
    prediction $\hat{x}_1 = \psi_1(x_0)$ under an off-manifold condition $\tilde{c} \notin \mathcal{M}$. Let $u^{local}
    = \hat{x}_1 - x_0$ be the constant-velocity optimal transport path connecting these specific endpoints. If the
    contrastive objective bounds the transport energy excess relative to this path below by a margin $\Delta_{local} >
    0$, such that $\int_0^1 ||v_t^\theta(\psi_t(x_0), \tilde{c}) - u^{local}||_2^2 dt \ge \Delta_{local}$, then the
    trajectory's total curvature (integrated acceleration) $\kappa = \int_0^1 ||\frac{d}{dt}v_t^\theta(\psi_t(x_0),
    \tilde{c})||_2^2 dt$ is lower-bounded by $\kappa \ge \pi^2 \Delta_{local}$ (Proof in Appendix
    \ref{proof:divergence_guarantee}).
\end{theorem}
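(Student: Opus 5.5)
The plan is to reduce the statement to the Wirtinger (Poincaré) inequality with Dirichlet boundary conditions on $[0,1]$. Define the displacement from the straight-line path,
\[
g(t) = \psi_t(x_0) - \bigl(x_0 + t\,u^{local}\bigr), \qquad t\in[0,1].
\]
Because $\psi_0(x_0)=x_0$ and $\psi_1(x_0)=\hat{x}_1 = x_0 + u^{local}$, we have $g(0)=g(1)=0$. Along the trajectory, the ODE gives
\[
g'(t) = v_t^\theta(\psi_t(x_0),\tilde{c}) - u^{local},
\]
and since $u^{local}$ is constant,
\[
g''(t) = \tfrac{d}{dt} v_t^\theta(\psi_t(x_0),\tilde{c}).
\]
The hypothesis therefore reads $\int_0^1 \|g'\|^2\,dt \ge \Delta_{local}$, and the quantity to bound is $\kappa = \int_0^1 \|g''\|^2\,dt$. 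I will assume enough regularity that $g\in C^2$ (or at least $g'\in H^1$); this holds if $v^\theta$ is $C^1$ in $(t,x)$.

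The key observation is that $g'$ has zero mean, since $\int_0^1 g'\,dt = g(1)-g(0)=0$. The Poincaré–Wirtinger inequality on $[0,1]$ for zero-mean functions, applied componentwise, gives
\[
\int_0^1 \|g'\|^2\,dt \le \frac{1}{\pi^2}\int_0^1 \|g''\|^2\,dt.
\]
Indeed, the optimal constant is $1/\lambda_1$, where $\lambda_1 = \pi^2$ is the first nonzero Neumann eigenvalue of $-d^2/dt^2$ on $[0,1]$, with eigenfunction $\cos(\pi t)$. Chaining this with the hypothesis gives $\kappa \ge \pi^2 \int_0^1\|g'\|^2\,dt \ge \pi^2\Delta_{local}$, as claimed.

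For completeness I would prove the inequality by Fourier expansion. Expand each component of $g'$, which is zero-mean and in $L^2$, in the cosine basis $\{\sqrt2\cos(k\pi t)\}_{k\ge1}$; the constant mode vanishes because the mean is zero. Then $g''$ has the sine coefficients $-k\pi a_k$. Parseval yields $\int_0^1\|g''\|^2 = \sum_k k^2\pi^2\|a_k\|^2 \ge \pi^2\sum_k\|a_k\|^2 = \pi^2\int_0^1\|g'\|^2$. The differentiation step is justified by integrating by parts against $\cos(k\pi t)$: the boundary terms carry a factor $\sin(k\pi t)$ and vanish at $t=0,1$.

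The main obstacle is making sure the right boundary structure is used. Applying the Dirichlet Wirtinger inequality directly to $g'$ would be wrong, because $g'$ need not vanish at the endpoints. The argument must instead use either the zero-mean (Neumann-type) version, or equivalently the Dirichlet data of $g$ itself. Both versions give the constant $\pi^2$, and that constant is sharp: it is attained by $g(t)=\sin(\pi t)\,e$ for a fixed unit vector $e$. The remaining point is the regularity needed to differentiate $v_t^\theta$ along the trajectory. If $\kappa=\infty$ the bound holds trivially, so it suffices to treat $g'\in H^1$.
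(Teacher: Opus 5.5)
Your proof is correct and matches the paper's argument: your $g'$ is exactly the paper's deviation $\tilde{\epsilon}_t = v_t^\theta(\psi_t(x_0),\tilde{c}) - u^{local}$, whose zero temporal mean (from the endpoint constraint) is what lets both you and the paper apply the zero-mean Poincar\'e--Wirtinger inequality with sharp constant $1/\pi^2$ and chain it with the hypothesis. Your additions are a self-contained Fourier-cosine proof of that inequality and a more careful treatment of regularity (assuming $v^\theta$ is $C^1$, and noting the case $\kappa=\infty$ is trivial), where the paper instead appeals to smooth activations and boundedness to place $\tilde{\epsilon}$ in $H^1$.
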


Theorem~\ref{thm:realized_energy_curvature} formalizes the core mechanism of our framework. By locally
elevating the Lipschitz constant at the manifold boundary via our new losses, we explicitly induce an excess in
transport energy. As demonstrated above, this energy excess directly bounds the integrated acceleration, structurally
enforcing high curvature along the integration path. Consequently, this ensures a distinct structural divergence:
while valid conditions exhibit only the baseline curvature inherent to the marginal flow, off-manifold extrapolations
are mathematically forced into highly inefficient, turbulent trajectories, as illustrated in Figure~\ref{fig:concept_fig}. The training algorithm is presented in
Appendix~\ref{appendix:algorithms}.

\subsection{Manifold Boundary via Negative Mining}
\label{sec:negative_mining}

The effectiveness of the contrastive margins depends on constructing hard negatives that lie near, but outside, the support of valid conditions. These boundary-adjacent perturbations are critical for inducing a localized separation in the transport energy landscape around $\mathcal{M}$.

Our framework is agnostic to the specific negative mining strategy. When domain knowledge is available, one may leverage semantic perturbations, feature-space Mixup, or Virtual Outlier Synthesis (VOS) \citep{du2022vos}. In the absence of such priors, we employ adversarial Projected Gradient Descent (PGD) \citep{madry2017towards} as a domain-agnostic mechanism for generating informative negatives.

Rather than randomly sampling the ambient space, PGD explicitly mines hard negatives strictly at the manifold boundary. It achieves this by seeking perturbations within an $\epsilon$-ball that locally maximize the baseline transport energy, such that $\tilde{c}^* \approx \arg\max_{\|\tilde{c} - c\|_\infty \le \epsilon} \|v_t^\theta(x_t, \tilde{c}) - u_t^{OT}\|_2^2$. This identifies directions of maximal sensitivity. We implement this via a $K$-step iterative projected ascent under the $L_\infty$ norm, with step size $\eta = \epsilon / K$, clamping values to the valid data domain:
\begin{equation}
    \tilde{c}_{k+1} = \Pi_{\mathcal{B}_\epsilon^\infty(c)}\left( \tilde{c}_k + \eta \cdot \text{sgn}\big(
    \nabla_{\tilde{c}} \mathcal{L}_\algotext{FM}(\tilde{c}_k)\big)\right).
\end{equation}

Crucially, PGD relies on the dynamics of the $\mathcal{L}_{\algotext{\algosmall}}$ objective. If adversarial mining were applied solely
against the standard $\mathcal{L}_{\algotext{FM}}$, the resulting negatives would be weak, since FM training flattens
the energy landscape, neutralizing the local gradients. However, by integrating PGD iteratively during contrastive training, our objective
actively steepens these boundary gradients, guiding PGD to discover progressively harder adversarial
examples. We illustrate this in Appendix~\ref{sec:pgd_evolution}.

\subsection{Detecting Extrapolations via Trajectory Divergence}
\label{sec:detection}

During training, OT-FM defines linear conditional paths. However, because these independent trajectories naturally cross
in state space, the network instead learns the marginal vector field --- pulling the state toward the expected target $\mathbb{E}[x_1 \mid x_t]$. At inference, this expectation acts as a moving target. Early in the generative process ($t \approx 0$), the posterior is broad, and the velocity points toward a heavily averaged, global mean. As integration progresses, this posterior sharply concentrates toward a single materialized sample. Chasing this continuously shifting target dynamically realigns the velocity vector, which inherently curves the generated trajectories despite the linear training paths.

Although valid inference flows are not perfectly straight, we seek to quantify the trajectory's macroscopic deviation
induced by the contrastive objective. Thus, we define a constant-velocity geometric anchor to serve as a
fixed reference metric.  For a generated
sequence of states $\{\hat{x}_{t_i}\}_{i=1}^N$ solved over $N$ discrete time steps from $x_0$ to the final prediction
$\hat{x}_1$, this linear interpolant is defined as: $x^{\text{ref}}_{t_i} = (1-t_i)x_0 + t_i \hat{x}_1$. 

We measure this deviation using the \textit{Divergence from Optimal Trajectory} (DOT) score, defined as the
spatiotemporal average of the $L_1$ distance between the realized flow and this reference anchor:
\begin{equation}\label{eq:dot_score}
S_{DOT}(\hat{x}_{0:1}) = \sum_{i=1}^{N} \left( \frac{1}{D} \sum_{d=1}^{D} | \hat{x}^{(d)}_{t_i} - x^{\text{ref}^{(d)}}_{t_i} | \right)
\end{equation}
where $D$ is the spatial dimensionality. While on-manifold flows naturally exhibit minor curvature ($S_{DOT} > 0$),
extrapolation flows exhibit massive, mathematically enforced geometric divergence, resulting in significantly larger DOT scores ($S_\text{DOT}(c) \ll
S_\text{DOT}(\tilde{c})$). 

Because valid, on-manifold inputs possess a natural distribution of DOT scores rather than a strict zero value,
establishing a heuristic detection threshold is suboptimal. To ensure safety-critical reliability, we employ Split
Conformal Prediction \citep{angelopoulos2024theoretical} on a calibration set of valid conditions. This framework
calculates a statistically rigorous decision boundary, providing a distribution-free, finite-sample guarantee that the
false rejection rate for valid inputs remains bounded by a user-specified error rate $\alpha$ during
deployment. Crucially, because the DOT metric relies solely on lightweight $L_1$ distance computations against a
geometric anchor, it introduces near-zero computational overhead ($\sim 0.8\%$) to the standard generative inference
pass. Further details on the conformal calibration and inference algorithms are provided in
Appendices~\ref{appendix:split_cp} and \ref{appendix:algorithms}.

\section{Experiments}\label{experiments}

We evaluate {\algo} across three distinct regimes that showcase the geometric behavior and deployment viability: (i)
geometric verification on synthetic manifolds, (ii) weather temperature forecasting, and (iii) cross-domain style
transfer. Since no existing FM or Diffusion models possess built-in extrapolation awareness, we benchmark primarily
against standard FM. To evaluate native extrapolation detection, we exclude methods requiring auxiliary
networks or post-hoc pipelines, instead evaluating the baseline via two distinct scoring mechanisms. First, we employ
our proposed DOT score on the standard model; this verifies that detection capabilities stem from the
contrastive reshaping of the vector field rather than the metric alone. Second, we compare against the FM model's
likelihood, representing the conventional statistical approach. Following \citet{lipman2024flow}, we estimate this using
the instantaneous change of variables formula with Hutchinson’s trace estimator. Furthermore, we benchmark against an FM
Ensemble with a depth of three, utilizing the ensemble's variance as the extrapolation score. We limit this comparison
to regression tasks; in conditional generation tasks, a single valid condition naturally maps to a diverse distribution of outputs, rendering ensemble variance an unreliable indicator of off-manifold hallucination. For completeness, we provide a pure detection benchmark comparing {\algo} against the DiffPath detector in Appendix~\ref{appendix:diffpath}.


We quantify \textit{Safety} via two metrics: (1) the AUROC \citep{fawcett2006introduction}, measuring global
separability; and (2) the FPR governed by split conformal prediction constraints. We calibrate a decision region
$C_{\alpha}$ to guarantee a marginal coverage of $1-\alpha$ for on-manifold inputs. The FPR reports the
percentage of off-manifold anomalies $\tilde{c}$ that incorrectly satisfy this condition, defined formally as
$\text{FPR} = \mathbb{P}(S(\tilde{c}) \in C_{\alpha} \mid \tilde{c} \notin \mathcal{M})$. For
\textit{Fidelity}, we ensure our contrastive objective does not degrade predictive performance using task-specific
metrics. In regression, we monitor physical consistency via Mean Squared Error (MSE), Peak Signal-to-Noise Ratio (\text{PSNR})~\citep{hore2010image}, and Structural Similarity (\text{SSIM})~\citep{wang2003multiscale, wang2004image}. In generative tasks, we assess distribution quality via the Fréchet Inception Distance~\citep{heusel2017gans}, defined as $\text{FID} = \|\mu_r - \mu_g\|^2 + \text{Tr}(\Sigma_r + \Sigma_g - 2(\Sigma_r \Sigma_g)^{1/2})$, and Perceptual Similarity (LPIPS)~\citep{zhang2018unreasonable}. Detailed hyperparameters and architecture specifications are provided in Appendix \ref{appendix:exp_details}.

\subsection{Synthetic Manifold Experiments}\label{sec:spiral_sec}

Let $\mathcal{M}^2$ be a spiral manifold parameterized by $\theta \sim
\mathcal{U}[0, 5\pi]$. A valid sample $x(\theta)$ is defined as: $x(\theta) = r(\theta)(\cos \theta \; \sin
\theta)^\intercal + \xi$, where $r(\theta) = \frac{\theta}{5\pi}$ is the normalized radius and $\xi \sim \mathcal{N}(0,
\sigma^2 I)$ represents intrinsic noise ($\sigma_\text{gen} = 5\cdot10^{-3}$ and $\sigma_\text{reg} = 2\cdot10^{-3}$).
The invalid set is defined as the ambient space separated from the manifold support:
$\mathcal{C}^c = \{c \in [-1, 1]^2 \mid \min_{c' \in \mathcal{M}} \|c - c'\|_2 > \epsilon\}$.

We evaluate two conditional tasks: (i) \textbf{Probabilistic Regression:} Given a current state $c = x(\theta_k)$, the
model predicts the future trajectory $y = [x(\theta_{k+1}), \dots, x(\theta_{k+10})]$. This tests the preservation of
local dynamics alongside the ability to reject invalid inputs. (ii) \textbf{Conditional Generation:} The model learns
$p_\theta(x|c) \approx p_1(x)$ for any valid condition $c \in \mathcal{M}$, subject to $x \neq c$. Here, $c$ acts
as a validity token; off-manifold conditions must trigger vector field divergence rather than sampling valid data. For
the contrastive training, we employ PGD to mine negative samples. The
results of the experiments are illustrated in Figure~\ref{fig:2d_spiral}, where we show how the flow ODE of FM and {\algosmall}
behaves when an off-manifold input is encountered.

\begin{table}[h]
\centering
    \captionsetup{font=small} 
\caption{\textbf{Results on Synthetic Manifolds}. FM fails to detect off-manifold inputs effectively, while
    {\algo} achieves consistently high separation with low FPR after the conformal calibration. We report the mean and standard deviation across three random seeds.}
\label{tab:2d_results}
\scriptsize
\renewcommand{\arraystretch}{0.85} 
\setlength{\tabcolsep}{4pt} 
\begin{tabular*}{\textwidth}{@{\extracolsep{\fill}}llccc}
\toprule
    & \algotext{Algorithm} & \algotext{AUROC} $\uparrow$ & \algotext{FPR(\%)} $\downarrow$ & \algotext{MSE }($10^{-5}$) $\downarrow$ \\
\midrule

    \multirow{3}{*}{\rotatebox[origin=c]{90}{\algotext{Gen.}}} 
    & \algotext{FM-Likelihood} & \mest{0.493}{0.004} & \mest{97.0}{0.4} & -- \\
    & \algotext{FM-DOT}        & \mest{0.495}{0.006} & \mest{98.0}{0.2} & -- \\
    & \algoours{(Ours)} & \textbf{\mest{0.981}{0.004}} & \textbf{\mest{4.6}{1.0}} & -- \\

\midrule

\multirow{4}{*}{\rotatebox[origin=c]{90}{\algotext{Regr.}}}
& \algotext{FM-Likelihood} & \mest{0.566}{0.02}          & \mest{98.9}{0.2}          & \mest{8.43}{0.09} \\
& \algotext{FM-DOT}        & \mest{0.594}{0.02}          & \mest{94.5}{0.7}          & \mest{8.43}{0.09} \\
& \algotext{FM Ensemble}   & \mest{0.688}{0.01}          & \mest{88.8}{2.0}          & \textbf{\mest{8.40}{0.08}} \\
& \algoours{(Ours)}        & \textbf{\mest{0.998}{0.00}} & \textbf{\mest{1.00}{0.0}} & \mest{8.48}{0.10} \\

\bottomrule
\end{tabular*}
\end{table}

Results on synthetic manifolds (Table ~\ref{tab:2d_results}) empirically confirm the smoothness hazard inherent to Flow Matching. The baseline
yields AUROC scores near $0.50$ for both tasks, effectively reducing off-manifold detection to random guessing.
While the ensemble performs slightly better, it remains an unreliable detector. This
proves that without explicit constraints, the model extrapolates smooth paths even for invalid inputs. In
contrast, {\algo} induces a sharp geometric transition, achieving consistently high separation while
maintaining predictive fidelity comparable to the baseline. This confirms that
safety can be embedded into the vector field without compromising in-distribution dynamics. Visualizations of the
learned detection landscapes across the full ambient space are provided in Appendix~\ref{appendix:2d_landscape}. In
Appendix~\ref{appendix:sensitivity_2d}, we perform an ablation study to isolate the contribution of each loss component
and validate the necessity of a targeted negative mining strategy. Additionally, we provide a comprehensive sensitivity analysis to demonstrate the robustness of the model's hyperparameters across these tasks.

\subsection{Weather Temperature Forecasting}\label{sec:weather_for}

We evaluate the deployment viability of {\algo} on the ERA5 dataset~\citep{era5paper}, by forecasting surface
temperature heatmaps ($64 \times 64$) six hours into the future. The model $p_\theta(x_{t+6h}|x_t)$ is parameterized as
a U-Net, where the current atmospheric state $x_t$ is injected via cross-attention layers to condition the generation of
the future state. For this task, we utilize PGD for mining negative samples.

\begin{figure}[h]
    \centering
    \includegraphics[width=0.60\textwidth]{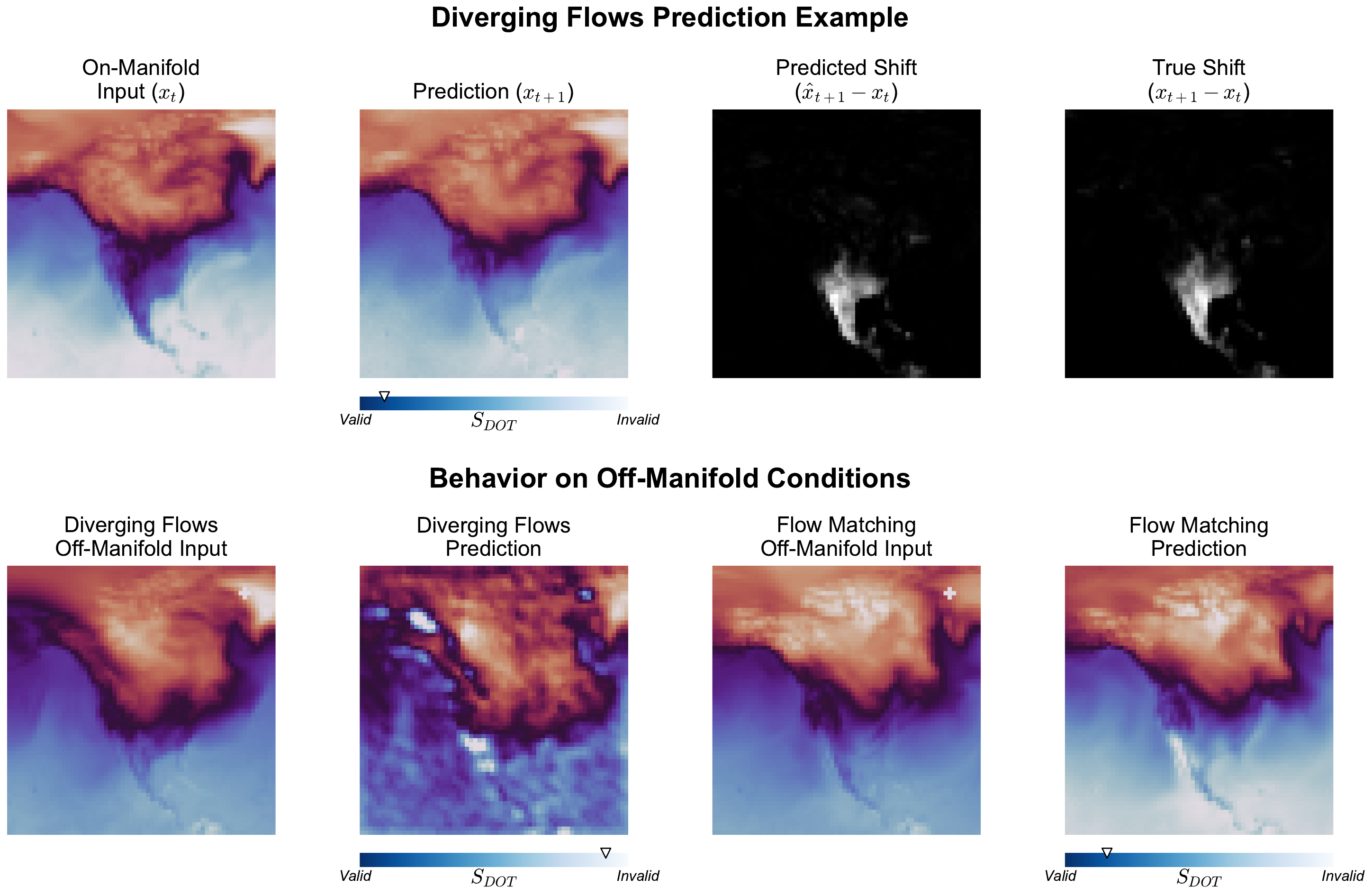}
    \captionsetup{font=small} 
    \caption{\textbf{High-Fidelity Forecasts with {\algo}.} (Top) {\algo} preserves exact physical dynamics for valid
    inputs. (Bottom) Off-manifold thermal anomalies trigger mathematically enforced trajectory divergence, breaking the
    flow, while standard FM hallucinates silently.}\label{fig:weather_fig}
\end{figure}

This domain demands strict adherence to physical laws; a reliable model must reject inputs that violate thermodynamic principles. 
To test the physical consistency, we adopt the artificial hotspot setup from \citet{chan2024estimating}. We perturb
valid inputs by injecting thermal anomalies into historically cold regions. These perturbations are semantically
plausible as images but violate local thermodynamic laws. A conservative forecaster must identify these inputs as
physically infeasible rather than hallucinating. We compare our method against standard FM, an FM Ensemble, and HyperDM
(the diffusion baseline from the original benchmark). Note that HyperDM lacks native off-manifold detection and is
included as a reference for predictive fidelity.

\begin{table}[h]
\centering
    \captionsetup{font=small} 
    \caption{\textbf{Weather Temperature Forecasting Performance (ERA5)}. {\algosmall} achieves robust awareness of
    physical anomalies while maintaining competitive predictive fidelity on valid forecasts. We report the mean and standard deviation across three random seeds.}
\label{tab:weather_forec}
\scriptsize
\renewcommand{\arraystretch}{0.85} 
\setlength{\tabcolsep}{4pt} 

\begin{tabular*}{\textwidth}{@{\extracolsep{\fill}}lcccc}
\toprule
\algotext{Algorithm} & \algotext{MSE} $\downarrow$ & \algotext{SSIM} $\uparrow$ & \algotext{PSNR} $\uparrow$ & \algotext{AUROC} $\uparrow$ \\
\midrule

\algotext{HyperDM}               & \ttp{4}{-3}                      & 0.954                          & 33.15                         & --- \\
\algotext{FM-Likelihood}         & \ttpstd{8.1}{1.1}{-4}            & \mest{0.970}{0.001}            & \mest{36.97}{0.61}            & \mest{0.512}{0.02}  \\
\algotext{FM-DOT}                & \ttpstd{8.1}{1.1}{-4}            & \mest{0.970}{0.001}            & \mest{36.97}{0.61}            & \mest{0.599}{0.01}  \\
\algotext{FM Ensemble}           & \textbf{\ttpstd{6.7}{0.00}{-4}}  & \textbf{\mest{0.975}{0.001}}   & \textbf{\mest{37.73}{0.65}}   & \mest{0.863}{0.08} \\
\algoours{(Ours)}                & \ttpstd{8.22}{0.3}{-4}           & \mest{0.967}{0.002}            & \mest{36.36}{0.21}            & \textbf{\mest{0.991}{0.01}} \\

\bottomrule
\end{tabular*}
\end{table}

The results on ERA5 (Table~\ref{tab:weather_forec}) show that standard models fail to capture physical constraints. The
FM baseline
yields a near-random AUROC because it ignores the violation, treating the hotspot as valid high-frequency noise, as
illustrated in Figure~\ref{fig:weather_fig}. The ensemble exhibits high performance across all predictive metrics and
sufficient detection capability; however, this imposes massive training and inference overhead.
Compared to the HyperDM baseline, flow-based methods perform slightly better, in terms of predictive fidelity.

In contrast, {\algo} achieves near-perfect detection. By altering the transport for invalid inputs, the network
effectively learns to verify physical validity. Critically, we observe that {\algo} contributes to nearly-zero
degradation in the predictive performance compared to the standard FM. This confirms that geometric
regularization effectively filters unphysical dynamics without hindering the learning of valid temperature patterns.
In Appendix~\ref{app:far_off_manifold}, we further demonstrate
that the learned boundary effectively generalizes to detect extreme, far off-manifold semantic shifts. 
Lastly, conformal prediction ($\alpha=0.05$) yields a 1.6\% FPR, providing a rigorous statistical guarantee of 95\%
coverage while identifying thermodynamic extrapolations. To evaluate the robustness of {\algo} in higher dimensions, we perform a
sensitivity analysis on this task in Appendix~\ref{appendix:sensitivity_era5}.

\subsection{Cross-Domain Style Transfer}\label{sec:style_trans}

Lastly, we investigate the model's ability to handle structurally disjoint conditioning and training manifolds by mapping grayscale
MNIST digits ($c$)~\citep{LeCun2005TheMD} to RGB Street House View Numbers (SVHN) ($x$)~\citep{netzer2011reading}. We
introduce Fashion-MNIST (FMNIST)~\citep{xiao2017fashion} and Kuzushiji-MNIST (KMNIST)~\citep{clanuwat2018deep} to probe
whether {\algo} implicitly captures the semantic support of valid conditions. FMNIST represents a clear
\textit{structural shift} (clothing vs. digits), whereas KMNIST poses a harder \textit{semantic shift}, sharing
significant low-level statistics (e.g., stroke density) with MNIST. To evaluate the flexibility in negative mining for
this visual domain, we compare both PGD and semantic transformations. We describe the transformations we used along with
an ablation study on additional mining techniques in Appendix~\ref{appendix:mining_abl}.

\begin{table}[h]
\centering
    \captionsetup{font=small} 
    \caption{\textbf{Cross-Domain Style Transfer Results}. {\algosmall} successfully learns a semantic boundary,
    rejecting invalid queries without degrading generative quality (FID) compared to the baseline.}
\label{tab:style_results}
\scriptsize 
\renewcommand{\arraystretch}{0.85} 
\setlength{\tabcolsep}{4pt} 

\begin{tabular*}{\textwidth}{@{\extracolsep{\fill}}lccc}
\toprule
\algotext{Algorithm} & \algotext{FID-50K} $\downarrow$ & \algotext{LPIPS} & \algotext{AUROC (FMNIST/KMNIST)} $\uparrow$ \\
\midrule
\algotext{FM-Likelihood} & \textbf{4.102} & 0.2172 & 0.529 / 0.516 \\
\algotext{FM-DOT}        & \textbf{4.102} & 0.2172 & 0.527 / 0.513 \\
\algoours{PGD} & 4.104 & 0.2202 & \textbf{0.955} / \textbf{0.860} \\
\algoours{Transformations} & 4.109 & 0.2213 & \textbf{0.987} / \textbf{0.892} \\
\bottomrule
\end{tabular*}
\end{table}

\begin{wrapfigure}{r}{0.5\textwidth}
    \centering
    \includegraphics[width=0.9\linewidth]{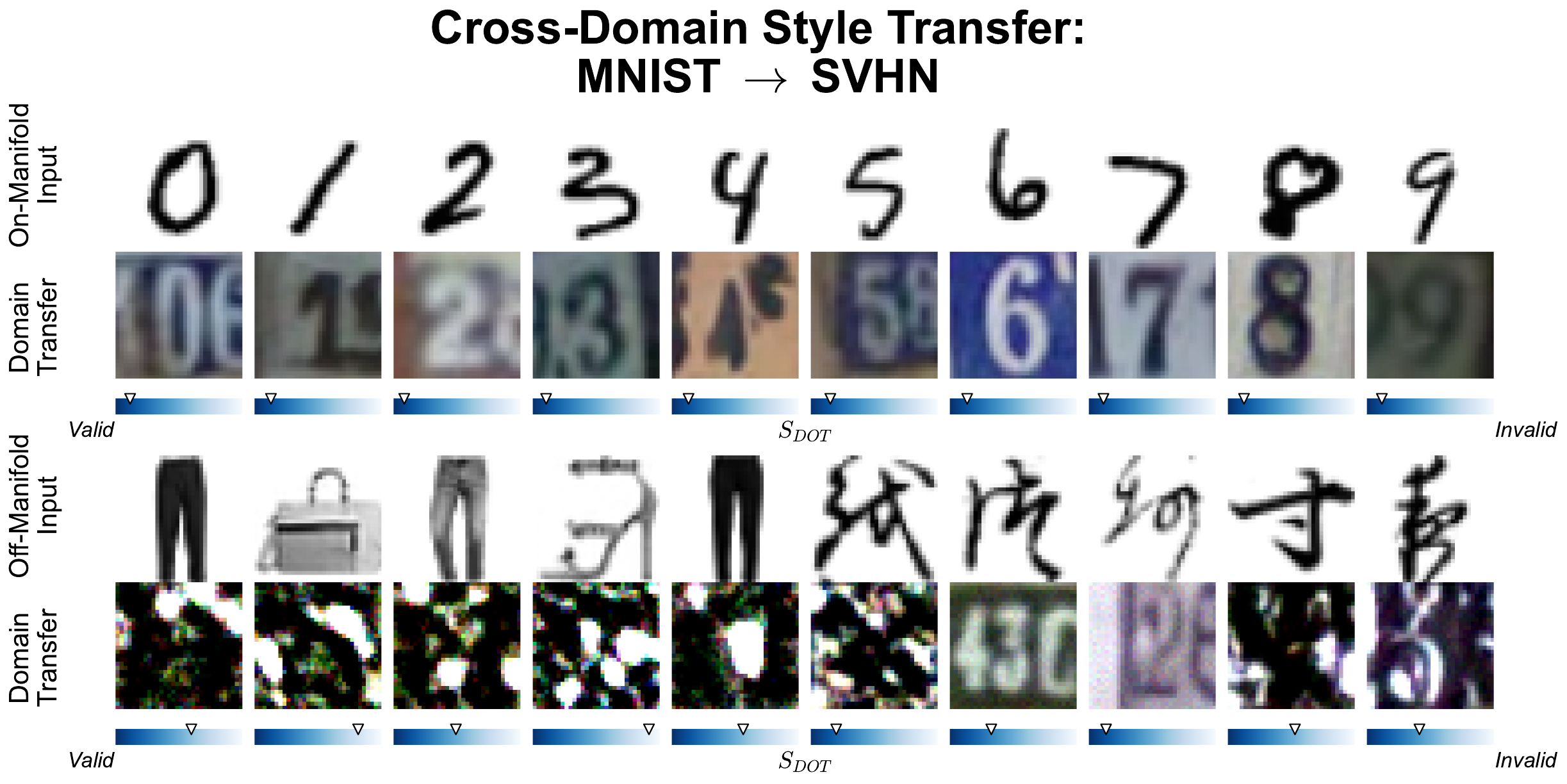}
    \captionsetup{font=small} 
    \caption{\textbf{Generative Quality in Cross-Domain Transfer.} Samples mapping MNIST $\to$ SVHN. {\algo}
    preserves the semantic identity of the digit.}
    \label{fig:mnist_to_svhn_gens}
\end{wrapfigure}

The style transfer results (Table~\ref{tab:style_results}) highlight that FM ignores semantic boundaries, since it
extrapolates confident paths for KMNIST (near-random AUROC). In contrast, {\algo} imposes a robust boundary via PGD or Transformations,
learning to reject non-digit concepts. Crucially, this awareness comes with negligible cost to
generation quality; as shown in Figure~\ref{fig:mnist_to_svhn_gens}, the model retains expressivity for valid
inputs, confirming that geometric regularization targets invalid semantics without constraining the valid generative
distribution.

\section{Conclusion and Limitations}\label{conclusion}

{\algo} allows a single Flow Matching model to possess native extrapolation awareness while performing high-fidelity
prediction tasks. Our core contribution is the formulation of a
geometric phase transition: by actively enforcing trajectory divergence for off-manifold inputs, we transform the vector
field from a generator into a self-monitoring mechanism. This shifts extrapolation detection from a post-hoc statistical
estimation problem to a geometric constraint, ensuring that safety is an inherent property of the dynamics
rather than an external add-on. Crucially, this design of {\algo} introduces negligible inference overhead,
establishing it as a robust solution for real-time critical applications.

A limitation of our approach is the training overhead required to create hard negatives in the case of an iterative
process like PGD. Additionally, our current formulation relies on Euclidean Optimal Transport; but the contrastive principles
developed here can easily be extended to Riemannian manifolds, which would make it effective to support geometric deep
learning tasks~\citep{chen2024flow}. Finally, as a next step, we plan to deploy {\algo} in real-world robotic tasks, where detecting
extrapolations is essential to avert potentially harmful outcomes.

\section*{Acknowledgments}
This work was supported by a DGA-Inria contract (ATOR
Project), the EU Horizon project euROBIN (GA n.101070596), and
the France 2030 program through the PEPR O2R projects
AS3 and PI3 (ANR-22-EXOD-007, ANR-22-EXOD).

\medskip

{
\small

\bibliographystyle{unsrtnat}
\bibliography{references}
}


\newpage
\appendix

\section{Broader Impact}
\label{appendix:impact}
The goal of this work is to advance the reliability of generative models in scientific and industrial tasks. By enabling
models to detect and reject extrapolations, our method mitigates the risks of silent hallucinations in
safety-critical applications such as weather forecasting, autonomous systems, and scientific discovery. We believe this
work contributes positively to the responsible deployment of machine learning in high-stakes domains by transforming
invisible failure modes into detectable signals. We do not foresee specific negative societal consequences, as the
primary focus of this work is to advance machine learning for reliability and transparency.

\section{Remarks and Proofs}
\label{appendix:proofs}

\subsection{Proof of Proposition \ref{prop:lipschitz_energy}}

\begin{proof}
\label{proof:lipschitz_energy}
To determine the Lipschitz constant $L$, we must bound the absolute difference $|\mathcal{E}(\tilde{c}) - \mathcal{E}(c)|$ for a perturbed off-manifold condition $\tilde{c}$ relative to a valid on-manifold condition $c$. By the triangle inequality for expectations:
\begin{equation}
    |\mathcal{E}(\tilde{c}) - \mathcal{E}(c)| \le \mathbb{E}_{t, x_t} \left[ \left| ||v_t^\theta(x_t, \tilde{c}) - u^{OT}||_2^2 - ||v_t^\theta(x_t, c) - u^{OT}||_2^2 \right| \right]
\end{equation}
For any fixed $t$ and $x_t$, we can rewrite the difference of squared $L_2$ norms using the properties of inner products. Applying the Cauchy-Schwarz inequality followed by the triangle inequality yields:
\begin{equation}
\begin{aligned}
    & \left| ||v_t^\theta(x_t, \tilde{c}) - u^{OT}||_2^2 - ||v_t^\theta(x_t, c) - u^{OT}||_2^2 \right| \\
    &= \left| \langle v_t^\theta(x_t, \tilde{c}) - v_t^\theta(x_t, c), v_t^\theta(x_t, \tilde{c}) + v_t^\theta(x_t, c) - 2u^{OT} \rangle \right| \\
    &\le ||v_t^\theta(x_t, \tilde{c}) - v_t^\theta(x_t, c)||_2 \cdot ||v_t^\theta(x_t, \tilde{c}) + v_t^\theta(x_t, c) - 2u^{OT}||_2 \\
    &\le ||v_t^\theta(x_t, \tilde{c}) - v_t^\theta(x_t, c)||_2 \left( ||v_t^\theta(x_t, \tilde{c})||_2 + ||v_t^\theta(x_t, c)||_2 + 2||u^{OT}||_2 \right)
\end{aligned}
\end{equation}
By hypothesis, the vector field is locally Lipschitz continuous with constant $K$, such that $||v_t^\theta(x_t, \tilde{c}) - v_t^\theta(x_t, c)||_2 \le K||\tilde{c} - c||_2$. Furthermore, the vector field is uniformly bounded, such that $||v_t^\theta(\cdot)||_2 \le M$. Substituting these upper bounds and factoring the deterministic constants out of the expectation gives:
\begin{equation}
\begin{aligned}
    |\mathcal{E}(\tilde{c}) - \mathcal{E}(c)| &\le \mathbb{E}_{t, x_t} \left[ K||\tilde{c} - c||_2 (2M + 2||u^{OT}||_2) \right] \\
    &= 2K(M + ||u^{OT}||_2) ||\tilde{c} - c||_2
\end{aligned}
\end{equation}
Thus, the Transport Energy Excess $\mathcal{E}(c)$ is Lipschitz continuous with constant $L \le 2K(M + ||u^{OT}||_2)$.
\end{proof}

\begin{remark}[The Mechanics of Silent Extrapolation via Spectral Bias]
It is well-established that the implicit regularization of deep neural networks—specifically their spectral bias \citep{rahaman2019spectral}—favors the learning of smooth, low-frequency functions. Standard Flow Matching minimizes the baseline energy uniformly over the training distribution, which concentrates heavily on the valid condition manifold $\mathcal{M}$. Due to the spectral bias, the learned vector field preferentially fits these low-frequency components, implying that the Lipschitz constant remains small not only globally, but specifically in the local neighborhood of $\mathcal{M}$. Formally, for any $c \in \mathcal{M}$ and off-manifold query $\tilde{c} \in \mathcal{B}_\delta(c) \cap \mathcal{M}^c$, the local Lipschitz constant satisfies $K_{\text{loc}} \le K + \mathcal{O}(\delta)$. 

Proposition \ref{prop:lipschitz_energy} demonstrates that this bounded $K_{\text{loc}}$ mathematically forces the energy landscape to remain flat. Because standard Flow Matching explicitly minimizes $\mathcal{E}(c)$ for valid conditions, this tight continuity bound guarantees that for any proximate off-manifold condition $\tilde{c}$, the transport energy $\mathcal{E}(\tilde{c})$ cannot increase sufficiently to signal an extrapolation. This dynamic formally establishes the structural root of the silent extrapolation hazard: without active geometric regularization, the inherent smoothness of the neural vector field fundamentally masks invalid inputs.
\end{remark}

\subsection{Proof of Theorem \ref{thm:realized_energy_curvature}}
\label{proof:divergence_guarantee}
\begin{proof}
The generative flow must continuously connect the initial state $x_0$ to its final predicted state $\hat{x}_1$. We can decompose the learned conditional velocity field, evaluated along this generated trajectory, into the constant-velocity baseline of the realized path and a time-dependent geometric deviation:
\begin{equation}
    v_t^\theta(\psi_t(x_0), \tilde{c}) = u^{\text{local}} + \tilde{\epsilon}_t
\end{equation}
Because the integral of the total velocity over time must exactly equal the linear spatial displacement between the endpoints, we have:
\begin{equation}
    \int_0^1 (u^{\text{local}} + \tilde{\epsilon}_t) dt = \hat{x}_1 - x_0 = u^{\text{local}} \implies \int_0^1 \tilde{\epsilon}_t dt = 0
\end{equation}
Thus, the geometric deviation $\tilde{\epsilon}_t$ has a temporal mean of zero.

\textit{Regularity of $\tilde{\epsilon}_t$.} To rigorously apply the Poincar\'{e}--Wirtinger inequality, we must first establish that the error trajectory $\tilde{\epsilon}_t \in H^1([0,1]; \mathbb{R}^d)$. By construction, our parameterized vector field $v_t^\theta(x, c)$ utilizes smooth activation functions (e.g., SiLU, GELU) and differentiable time embeddings, rendering it continuously differentiable with respect to both time and state. Consequently, the flow $v_t^\theta(\psi_t(x_0), \tilde{c})$ is continuously differentiable in $t$ along the generated trajectory $\psi_t(x_0)$. Since the baseline transport vector $u^{\text{local}}$ is constant with respect to time, the deviation $\tilde{\epsilon}_t$ inherently preserves this continuous differentiability. Furthermore, because the vector field is uniformly bounded and the trajectory resides entirely within a compact domain (as inputs are normalized to $[-1,1]^d$), the total time derivative $\dot{\tilde{\epsilon}}_t$ is square-integrable on $t \in [0,1]$. Therefore, the $H^1$ regularity requirement is inherently satisfied.

By hypothesis, the transport energy excess relative to the linear path is bounded below:
\begin{equation}
    \int_0^1 \|\tilde{\epsilon}_t\|_2^2 dt \ge \Delta_{\text{local}}
\end{equation}

Since $u^{\text{local}}$ is a constant vector field, the acceleration of the trajectory is exactly the temporal derivative of the deviation: $\frac{d}{dt}v_t^\theta = \dot{\tilde{\epsilon}}_t$. Applying the Poincar\'{e}--Wirtinger inequality to the zero-mean function $\tilde{\epsilon}_t$, and noting that $\pi^2$ is exactly the first non-zero eigenvalue of the Neumann Laplacian on the interval $[0,1]$, we obtain the sharp bound:
\begin{equation}
\int_0^1 \|\tilde{\epsilon}_t\|_2^2 dt \le \frac{1}{\pi^2} \int_0^1 \|\dot{\tilde{\epsilon}}_t\|_2^2 dt
\end{equation}
Substituting the transport energy boundary yields the curvature lower bound $\kappa \ge \pi^2 \Delta_{\text{local}}$. Therefore, any excess kinetic energy injected into the system that does not contribute to the final spatial displacement must manifest as path curvature.
\end{proof}

\begin{remark}[Dependence of $u^{\text{local}}$ on $\tilde{c}$]
We note that $u^{\text{local}} = \hat{x}_1 - x_0$ depends implicitly on the off-manifold condition $\tilde{c}$ through the generated endpoint $\hat{x}_1$. Consequently, the curvature lower bound $\kappa \ge \pi^2 \Delta_{\text{local}}$ is trajectory-specific rather than universal. However, this does not weaken our detection guarantees: for any realized trajectory conditioned on $\tilde{c} \in \mathcal{M}^c$, the contrastive objective structurally enforces $\Delta_{\text{local}} > 0$ relative to that trajectory's own specific linear baseline. This guarantees detectable curvature regardless of where the endpoint $\hat{x}_1$ lands in state space. The DOT score directly mirrors this property by measuring deviation relative to the realized endpoint $\hat{x}_1$, ensuring perfect alignment between our theoretical bounds and our empirical detection metric.
\end{remark}

\subsection{A remark on the margin losses}
Penalizing the vector field at an off-manifold condition $\tilde{c}$ risks inadvertently "bleeding" into the flow of the
nearby valid condition $c$ due to network continuity. This is mitigated by two mechanisms. First, our margin-based
repulsion loss $\mathcal{L}_{\algotext{repel}}$ zero-outs its gradient once the minimum required transport divergence is
met, preventing unbounded local distortion. Second, assuming the network is locally Lipschitz, the spatial bleed scales
proportionally with the distance to the negative sample. For instance, if an adversarial strategy like PGD is used to
mine negatives within an $\epsilon$-ball, the flow deviation at the valid condition is bounded by
$\mathcal{O}(\epsilon)$. By enforcing a tight search radius (Appendix~\ref{appendix:sensitivity_2d}), the valid
predictive flow remains safely isolated, a safeguard empirically verified by our fidelity metrics across all tasks.

\section{Synergy of PGD and Contrastive Objectives}
\label{sec:pgd_evolution}

We visualize the dynamic synergy between our contrastive margins and PGD to demonstrate how the model explicitly learns the manifold boundary. Figures \ref{fig:pgd_evolution_fm} and \ref{fig:pgd_evolution} track the spatial distribution of off-manifold conditions generated by PGD on the 2D Spiral manifold across training steps for both standard FM and {\algo}.

Under standard FM training (Figure \ref{fig:pgd_evolution_fm}), the network's transport energy landscape remains inherently flat due to the smoothness bias of the baseline objective. Consequently, when PGD perturbations optimize to maximize this baseline transport energy, the process lacks strong directional guidance. The resulting negative samples remain diffusely scattered around the valid manifold $\mathcal{M}$. Even after 100K training steps, PGD fails to outline a distinct boundary, yielding weak negatives that provide no meaningful safety constraints.

In contrast, under {\algo} (Figure \ref{fig:pgd_evolution}), the model actively shapes the energy landscape. At initialization (Step 0), the negatives are similarly diffuse. However, as training progresses, the structural penalties ($\mathcal{L}_\algotext{repel}$ and $\mathcal{L}_\algotext{curve}$) penalize these initial points, which systematically steepens the local gradients around the manifold support. This dynamic steepening guides PGD to discover progressively harder, highly localized negative samples. By 50K steps, these adversarial samples tightly trace the exact contour of the valid conditions, establishing the sharp geometric transition required for robust extrapolation awareness.

\begin{figure}[h]
    \centering
    \includegraphics[width=0.6\textwidth]{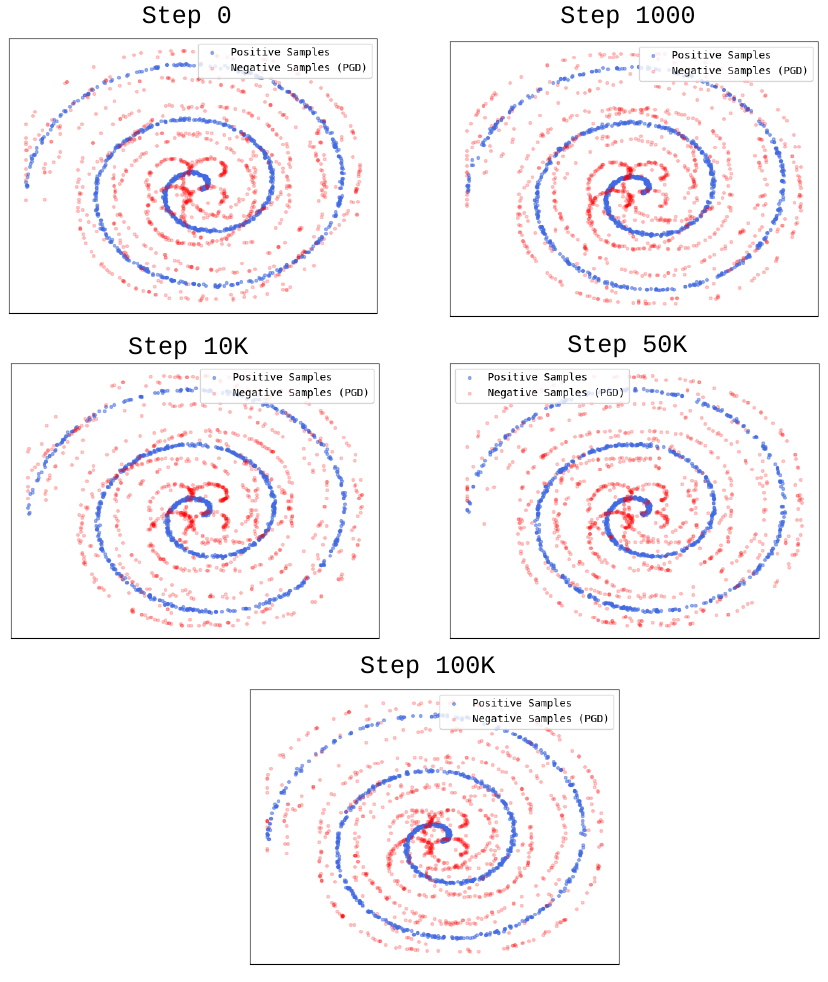}
    \caption{\textbf{Evolution of PGD-mined negative samples during standard FM training on the 2D Spiral.} Blue points represent valid on-manifold conditions ($c \in \mathcal{M}$), while red points denote adversarial off-manifold queries ($\tilde{c} \notin \mathcal{M}$). Because standard FM flattens the energy landscape, local gradients are weak; PGD fails to find a sharp boundary, leaving the negative samples diffusely scattered even at late training stages.}
    \label{fig:pgd_evolution_fm}
\end{figure}

\begin{figure}[h]
    \centering
    \includegraphics[width=0.6\textwidth]{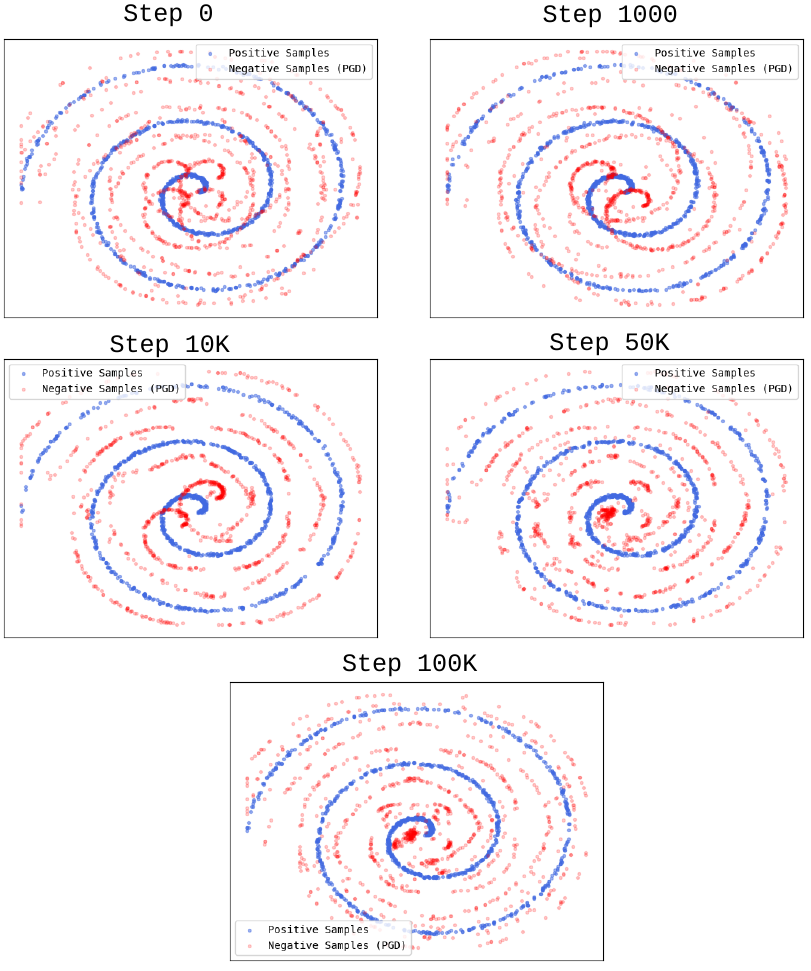}
    \caption{\textbf{Evolution of PGD-mined negative samples during {\algo} training on the 2D Spiral.} Blue points represent valid on-manifold conditions ($c \in \mathcal{M}$), while red points denote adversarial queries ($\tilde{c} \notin \mathcal{M}$). Driven by the contrastive margins, the local energy landscape steepens over time, actively guiding PGD to discover hard negatives that tightly trace the precise boundary of the valid data support.}
    \label{fig:pgd_evolution}
\end{figure}

\section{Generalization to Far Off-Manifold Conditions}
\label{app:far_off_manifold}

A standard critique of targeted negative mining strategies, such as adversarial PGD, is the
risk of manifold boundary overfitting. By explicitly optimizing the contrastive objective (Eq.~\ref{eq:df_loss}) on perturbations that lie in the local neighborhood ($\epsilon$-ball) of the valid data support $\mathcal{M}$, one might hypothesize that the model only learns a localized threshold of extrapolation awareness. If true, this would leave the deep ambient space (conditions residing far off-manifold) vulnerable to the same silent extrapolation hazards observed in unregularized Flow Matching models.

However, the geometric phase transition formalized in Section~\ref{methodology} structurally prevents this failure mode. By inducing extreme transport inefficiency at the boundary $\partial\mathcal{M}$, our contrastive objective effectively forces the model to implicitly learn the exact topological support of the valid conditioning distribution.

\paragraph{Theoretical Extension.} As established in Proposition~\ref{prop:lipschitz_energy}, the learned neural vector
field is constrained by a local Lipschitz continuity that heavily favors smoothness. Because the vector field cannot
undergo infinite-frequency spatial transitions, the massive transport energy excess injected at the boundary
structurally prevents the surrounding ambient space from collapsing back into straight, optimal-transport geodesics.
Furthermore, standard Flow Matching only minimizes transport energy on the valid support $\mathcal{M}$. Consequently,
there is no restorative geometric force (no explicitly supervised OT target) in the deep ambient space $\mathcal{M}^c$ pulling the vector field back to
efficiency. The trajectory curvature mathematically enforced at the boundary via Theorem~\ref{thm:realized_energy_curvature} must therefore propagate outward, ensuring that far off-manifold trajectories remain highly turbulent and inherently detectable via the DOT score.

\paragraph{Empirical Validation.} To empirically validate this global geometric constraint, we evaluate our {\algo} model—trained on the ERA5 Weather Temperature Forecasting dataset using local PGD mining
($\epsilon=0.1$)—against severe far off-manifold conditioning inputs. We subject the physical weather model to
conditions drawn from entirely disjoint semantic manifolds: MNIST~\citep{LeCun2005TheMD}, Fashion-MNIST (FMNIST)~\citep{xiao2017fashion}, and Kuzushiji-MNIST (KMNIST)~\citep{clanuwat2018deep}, as well as pure structural noise (Uniform and Gaussian). All far off-manifold inputs are upsampled to the $64 \times 64$ grid resolution and normalized to the $[-1, 1]$ interval to ensure they trigger the same magnitude-based activations within the U-Net backbone.

\begin{table}[h]
\centering
\caption{\textbf{Far Off-Manifold Detection Performance (ERA5 Model).} The {\algosmall} model, trained exclusively with local boundary perturbations, successfully generalizes its detection capabilities to deep ambient spaces and extreme semantic shifts, achieving perfect geometric separation across all datasets.}
\label{tab:far_off_manifold}
\small 
\renewcommand{\arraystretch}{0.85} 
\setlength{\tabcolsep}{4pt} 

\begin{tabular*}{\textwidth}{@{\extracolsep{\fill}}lc}
\toprule
\algotext{Far Off-Manifold Condition (Dataset)} & \algotext{AUROC} $\uparrow$ \\ 
\midrule
    \algotext{Gaussian Noise ($\mathcal{N}(0, I)$)} & 1.000 \\
    \algotext{Uniform Noise ($\mathcal{U}[-1, 1]$)} & 1.000 \\
    \algotext{MNIST (Semantic Shift)} & 1.000 \\
    \algotext{FMNIST (Semantic Shift)} & 1.000 \\
    \algotext{KMNIST (Semantic Shift)} & 1.000 \\ 
\bottomrule
\end{tabular*}
\end{table}

As demonstrated in Table~\ref{tab:far_off_manifold}, the model achieves perfect extrapolation detection across all
evaluated regimes. These results confirm that local boundary mining under the {\algo} objective is sufficient to geometrically constrain the entire ambient space. By isolating the valid support, the model guarantees that downstream systems will not receive silently hallucinated physical states when queried with unconstrained inputs.

\section{Split Conformal Prediction Calibration}\label{appendix:split_cp}

To ensure safety-critical reliability during deployment, we employ Split Conformal Prediction \citep{gammerman1998learning, angelopoulos2024theoretical} to construct a statistically valid decision boundary. Because our objective is to detect geometric turbulence, we specifically frame this as a Conformal Anomaly Detection problem. Given a user-specified error rate $\alpha \in (0, 1)$, our goal is to identify a one-sided upper validity threshold $q_{hi}$ such that the probability of a valid, on-manifold input exceeding this threshold is bounded by $\alpha$.

We reserve a calibration dataset of $M$ exchangeable on-manifold conditions. We first compute the divergence score $S_{DOT}$ for each sample and sort them in ascending order, denoting the sorted set as $\mathcal{S}_{cal} = \{s_{(1)}, s_{(2)}, \dots, s_{(M)}\}$. In this conformal anomaly detection setting, lower scores indicate more optimal, valid transport, while higher scores indicate off-manifold extrapolation. Therefore, we allocate our entire error budget $\alpha$ to the upper tail to maximize detection power. We calculate the rank index for the threshold as $k_{hi} = \lceil (M + 1) (1 - \alpha) \rceil$. The decision threshold is then defined by the score at this exact rank: 
\begin{equation}
    q_{hi} = s_{(k_{hi})}
\end{equation}

At inference time, a query condition $c_{test}$ is accepted as valid (in-distribution) if and only if its score does not exceed the calibrated threshold:
\begin{equation}
    S_{DOT}(c_{test}) \le q_{hi}
\end{equation}
If the score falls above this threshold, the generative process is flagged as an unsafe extrapolation. Based on the exchangeability of the calibration and test data, this one-sided procedure provides a rigorous guarantee on the marginal coverage. The probability that a valid on-manifold input is correctly accepted satisfies:
\begin{equation}
    \mathbb{P}\left(S_{DOT}(c_{test}) \le q_{hi}\right) \ge 1 - \alpha
\end{equation}
This ensures that the False Rejection Rate (FRR)---the rate at which valid inputs are incorrectly flagged as unsafe---is controlled exactly at the user-specified level $\alpha$, without needlessly penalizing optimal, low-turbulence trajectories.

\subsection{Interpretation of the FPR Metric}
In our main text experiments, we report the False Positive Rate (FPR) at a fixed calibration error rate of $\alpha = 0.05$ (which corresponds to a Target Coverage of 95\% for valid data). It is critical to distinguish this evaluation metric from the FRR discussed above. 

Because Split Conformal Prediction artificially guarantees the True Positive Rate (TPR) for valid data by forcing it to match the target coverage ($1-\alpha$), the quality of the geometric detection metric is determined entirely by its False Positive Rate---defined here as the fraction of \textit{off-manifold} samples that falsely fall within the acceptable $[q_{lo}, q_{hi}]$ interval.

\begin{itemize}
    \item \textbf{Ideal Behavior:} An ideal geometric detector perfectly separates the score distributions of on- and
        off-manifold inputs. Consequently, even as we lower the threshold to guarantee 99\% coverage of valid data, the
        FPR (extrapolation leakage) should remain near $0\%$.
    \item \textbf{Random Guessing (Baseline):} If the on- and off-manifold score distributions overlap perfectly (as is
        the case with the overly smooth baseline Flow Matching models), accepting $X\%$ of the valid data volume
        statistically necessitates accepting approximately $X\%$ of the invalid volume. This results in a failure mode where
        $\text{FPR} \approx \text{Coverage}$ (e.g., yielding the $\sim 95\%$ FPR observed for unregularized models in
        Table~\ref{tab:2d_results} when targeting $95\%$ coverage).
\end{itemize}

\section{Algorithmic Formulation of {\algo}}
\label{appendix:algorithms}

\begin{algorithm}[h]
   \caption{Training Algorithm of {\algosmall}}
   \label{alg:training}
\begin{algorithmic}[1]
   \Require Dataset $\mathcal{D}$, Loss weights $\lambda, \beta$, Margins $m_r, m_c$
   \State Initialize flow model parameters $\theta$
   \While{not converged}
       \Statex \textit{1. Coupling}
       \State Sample batch $(x_1, c) \sim \mathcal{D}$, prior noise $x_0 \sim p_0$, time $t \sim \mathcal{U}[0,1]$
       \State Construct linear interpolant: $x_t = (1-t)x_0 + tx_1$
       \State Compute target velocity: $u_t = x_1 - x_0$

       \Statex \textit{2. Hard Negative Mining}
       \State $\tilde{c} \gets \textsc{NegativeMining}(c, x_t, u_t, \theta)$ \hfill $\triangleright$ \textit{Agnostic miner (e.g., PGD, Mixup)}

       \Statex \textit{3. Velocity Predictions}
       \State $\hat{v}_{c} \gets v_{t}^{\theta}(x_t, c)$ \hfill $\triangleright$ \textit{On-manifold field}
       \State $\hat{v}_{\tilde{c}} \gets v_{t}^{\theta}(x_t, \tilde{c})$ \hfill $\triangleright$ \textit{Off-manifold field}

       \Statex \textit{4. Objective Computation}
       \State $\mathcal{L}_\algotext{FM} = \|\hat{v}_{c} - u_t\|_2^2$
       \State $\mathcal{L}_\algotext{repel} = \Phi\big(\|u_t - \hat{v}_{c}\|_2 - \|u_t - \hat{v}_{\tilde{c}}\|_2 + m_r\big)$
       \State $\mathcal{L}_\algotext{curve} = \Phi\big(d_{cos}(u_t, \hat{v}_{c}) - d_{cos}(u_t, \hat{v}_{\tilde{c}}) + m_c\big)$
       \State $\mathcal{L}_\algotext{\algosmall} = \mathcal{L}_\algotext{FM} + \lambda \mathcal{L}_\algotext{repel} + \beta
       \mathcal{L}_\algotext{curve}$

       \Statex \textit{5. Optimization Step}
       \State $\theta \gets \theta - \alpha \nabla_\theta \mathcal{L}_{Total}$ \hfill $\triangleright$ \textit{Update via optimizer}
   \EndWhile
\end{algorithmic}
\end{algorithm}

Standard Flow Matching inference involves numerically integrating the learned velocity field $v_t^\theta$ from a prior
noise sample $x_0$ to materialize a data sample $\hat{x}_1$. Algorithm~\ref{alg:inference} details this procedure within
the {\algo} framework. First, we generate the full trajectory $\{\hat{x}_{t_i}\}_{i=1}^N$ using a numerical ODE solver.
Second, we define our constant-velocity geometric anchor $x_t^{\text{ref}}$ as the linear interpolation between the
fixed source $x_0$ and the generated prediction $\hat{x}_1$. As defined in Eq.~\ref{eq:dot_score}, the divergence score
$S_{DOT}$ is computed as the $L_1$ deviation between the realized neural trajectory and this ideal anchor,
spatiotemporally averaged over all dimensions and summed over time steps. 

While the exact $S_{DOT}$ value fundamentally depends on the discrete steps produced by the numerical solver, we observe that the relative separation between on- and off-manifold inputs remains highly robust. As demonstrated in Table~\ref{tab:integrator_ablation}, detection performance (AUROC) and generative quality (LPIPS) remain stable across different solver families (Euler, Heun, RK4) and step counts, indicating that valid and invalid trajectories are affected similarly by discretization.

To formulate a rigorous binary decision during deployment, we compare the calculated divergence score $S_{DOT}$ against
a conformal bound $q_{hi}$, which is pre-calibrated on a hold-out set of valid conditions at a user-specified
significance level $\alpha$. The input condition $c$ is accepted as valid if and only if $S_{DOT} < q_{hi}$; otherwise, the generative process is flagged as an unsafe extrapolation.

\begin{algorithm}[h]
   \caption{Inference and DOT Scoring (Euler Integrator)}
   \label{alg:inference}
\begin{algorithmic}[1]
   \Require Condition $c$, Initial noise $x_0 \sim p_0$, Integration steps $N$
   \Ensure Prediction $\hat{x}_1$, Divergence score $S_{DOT}$

   \Statex \textit{1. Initialization}
   \State $\Delta t \gets 1/N$
   \State $x \gets x_0$
   \State $\mathcal{X}_{traj} \gets [x_0]$ \hfill $\triangleright$ \textit{Store trajectory states}

   \Statex \textit{2. Generative Flow Integration}
   \For{$i = 0$ \textbf{to} $N-1$}
       \State $t \gets i / N$
       \State $v \gets v_{t}^{\theta}(x, c)$ \hfill $\triangleright$ \textit{Evaluate marginal vector field}
       \State $x \gets x + v \cdot \Delta t$ \hfill $\triangleright$ \textit{Euler step}
       \State Append $x$ to $\mathcal{X}_{traj}$
   \EndFor
   \State $\hat{x}_1 \gets x$ \hfill $\triangleright$ \textit{Materialize final prediction}

   \Statex \textit{3. Trajectory Divergence Computation}
   \State $S_{DOT} \gets 0$
   \For{$i = 1$ \textbf{to} $N$} \hfill $\triangleright$ \textit{Evaluate $N$ realized steps}
       \State $t \gets i/N$
       \State $\hat{x}_t \gets \mathcal{X}_{traj}[i]$
       \State $x_t^{\text{ref}} \gets (1-t)x_0 + t\hat{x}_1$ \hfill $\triangleright$ \textit{Compute geometric anchor}
       \State $S_{DOT} \gets S_{DOT} + \frac{1}{D} \|\hat{x}_t - x_t^{\text{ref}}\|_1$ \hfill $\triangleright$ \textit{Accumulate spatial deviation}
   \EndFor
   \State \Return $\hat{x}_1, S_{DOT}$
\end{algorithmic}
\end{algorithm}

\begin{table}[h]
\centering
\caption{\textbf{Evaluation of detection across numerical ODE solvers.} The detection performance (AUROC), mean
    velocity, and sample quality (LPIPS) remain highly stable across various integration schemes and step counts on the
    style transfer task using the PGD mining.}
\label{tab:integrator_ablation}
\small 
\renewcommand{\arraystretch}{0.85} 
\setlength{\tabcolsep}{4pt} 

\begin{tabular*}{\textwidth}{@{\extracolsep{\fill}}llccccc}
\toprule
\multirow{2}{*}{\algotext{Integrator}} & \multirow{2}{*}{\algotext{Steps}} & \multicolumn{2}{c}{\algotext{AUROC}
    $\uparrow$} & \multicolumn{2}{c}{\algotext{Mean Velocity}} & \multirow{2}{*}{\algotext{LPIPS} $\downarrow$} \\
\cmidrule(lr){3-4} \cmidrule(lr){5-6}
& & \algotext{FMNIST} & \algotext{KMNIST} & \algotext{ID} & \algotext{OOD} & \\
\midrule
\algotext{Euler} & 25 & 0.955 & 0.859 & 0.100 & 0.256 & 0.2169 \\
\algotext{Euler} & 50 & 0.954 & 0.855 & 0.097 & 0.278 & 0.2175 \\
\midrule
\algotext{Heun2} & 25 & 0.956 & 0.858 & 0.094 & 0.269 & 0.2198 \\
\algotext{Heun2} & 50 & 0.955 & 0.855 & 0.093 & 0.279 & 0.2196 \\
\midrule
\algotext{RK4}   & 25 & 0.955 & 0.860 & 0.093 & 0.268 & 0.2202 \\
\algotext{RK4}   & 50 & 0.955 & 0.857 & 0.096 & 0.274 & 0.2197 \\
\bottomrule
\end{tabular*}
\end{table}

\section{Ablations and Sensitivity Analysis}\label{appendix:abbl}

\subsection{Efficacy of Negative Sampling Strategies}\label{appendix:mining_abl}

We evaluate {\algosmall} trained with PGD and three alternative negative sampling strategies:
\begin{enumerate}
\item \textbf{Heuristics}: Domain-specific spatial augmentations (random block erasing, elastic deformations, affine shearing, and perspective distortions) intended to create near-manifold perturbations;
    \item \textbf{Masking}: Random and checkerboard masking strategies that destroy local spatial information; and
    \item \textbf{Outlier Exposure}: Using the Omniglot \citep{lake2015human} dataset as a source of explicit, fixed negative samples.
\end{enumerate}

\begin{table}[h]
\centering
\caption{\textbf{Negative Sampling Ablation.} Comparison of boundary-definition strategies. While domain-specific
    heuristics naturally excel on known image manifolds, they fail to generalize to complex physical systems (Weather).
    In the table, we report the detection performance (AUROC$\uparrow$ ).}
\label{tab:unified_ablation}
\small 
\renewcommand{\arraystretch}{0.85} 
\setlength{\tabcolsep}{4pt} 

\begin{tabular*}{\textwidth}{@{\extracolsep{\fill}}lcc}
\toprule
    \algotext{Sampling Method} & \algotext{Temp. Forecast} & \algotext{Style Transfer (FMNIST / KMNIST)} \\
\midrule
\algotext{Masking}            & 0.729 & 0.654 / 0.541 \\
\algotext{Outlier Exposure}   & ---   & 0.710 / 0.572 \\
\algotext{Domain Transforms}  & 0.733 & \textbf{0.987} / \textbf{0.892} \\
\algotext{PGD} & \textbf{0.991} & 0.955 / 0.860 \\
\bottomrule
\end{tabular*}
\end{table}

The results in Table~\ref{tab:unified_ablation} highlight a fundamental limitation of static negative sampling: it
requires \textit{a priori} knowledge of the manifold boundary. \textbf{Heuristic Transforms} perform exceptionally well
on the image tasks (outperforming PGD on FMNIST and KMNIST) because spatial shifts perfectly align with the known
invariances of character recognition. However, this strategy fails completely on the physical Weather forecasting task.
In an atmospheric system, a heuristically transformed pressure field is not simply an "off-manifold" sample; it may represent an entirely different valid state or inherently violate fundamental thermodynamic laws. Without explicit domain knowledge, naive augmentations are unsuitable for defining the boundary of physical validity. 

Similarly, \textbf{Outlier Exposure} and \textbf{Masking} struggle to generalize. Outlier exposure overfits to the explicit geometry of the chosen negative dataset (Omniglot) and fails to detect semantic shifts (KMNIST) that do not structurally resemble those fixed training negatives.

In contrast, \textbf{PGD} serves as a versatile, modality-agnostic baseline. By dynamically optimizing for the most sensitive perturbations in the local neighborhood of the input, PGD automatically probes the relevant boundary constraints. While domain-specific heuristics trivially exploit known symmetries when they are available, PGD provides a reliable fallback that performs well across entirely disparate modalities without requiring manual, domain-specific engineering.

\subsection{Ablation and Sensitivity Analysis on Synthetic Manifolds}\label{appendix:sensitivity_2d}

\paragraph{Ablation Study.} To isolate the contribution of each objective component, we compare {\algosmall} against three
variants: (i) w/o $\mathcal{L}_\algotext{repel}$ ($\lambda=0$) to test directional divergence alone; (ii) w/o
$\mathcal{L}_\algotext{curve}$ ($\beta=0$) to test magnitude penalties alone; and (iii) Random Noise, replacing PGD
negative mining with $\tilde{c} \sim \mathcal{U}[-1, 1]$ to validate the necessity of a more sophisticated
mining strategy.

\begin{table}[h]
\centering
\caption{\textbf{Ablation on Synthetic Manifold Regression.} We isolate the impact of each {\algosmall} component. Replacing PGD with uniform noise ($\tilde{c} \sim \mathcal{U}$) or removing the kinetic repulsion margin ($L_{\algotext{repel}}$) severely degrades detection. Both structural penalties and targeted negative mining are required to minimize leakage (FPR) while maintaining predictive fidelity (MSE).}
\label{tab:abl2d}
\small
\renewcommand{\arraystretch}{0.85} 
\setlength{\tabcolsep}{4pt} 
\begin{tabular*}{\textwidth}{@{\extracolsep{\fill}}llccc}
\toprule
& \algotext{Algorithm} & \algotext{AUROC} $\uparrow$ & \algotext{FPR(\%)} $\downarrow$ & \algotext{MSE} $\downarrow$ \\
\midrule
\multirow{3}{*}{\rotatebox[origin=c]{90}{Reg.}} 
    & \algotext{{\algosmall} w/o $\mathcal{L}_{\algotext{repel}}$}          & 0.711 & 20.45 & \ttp{8.51}{-5} \\
    & \algotext{{\algosmall} w/o $\mathcal{L}_{\algotext{curve}}$}          & 0.931 & 10.69 & \ttp{8.48}{-5} \\
    & \algotext{{\algosmall} w/ $\tilde{c} \sim \mathcal{U}$} & 0.662 & 90.45 & \ttp{8.50}{-5} \\
\bottomrule
\end{tabular*}
\end{table}

The ablation study highlights the critical role of PGD. Replacing PGD with random noise negatives degrades performance
significantly, confirming that easy negatives fail to tighten the vector field around the manifold support. Furthermore,
we observe that the repulsion term $\mathcal{L}_\algotext{repel}$ is the primary driver of the detection signal; removing it
causes a sharper drop in performance compared to removing the curvature penalty $\mathcal{L}_\algotext{curve}$, though both are
required for optimal separation. 

\paragraph{Sensitivity Analysis.} To rigorously evaluate the stability of {\algo}, we conduct a comprehensive hyperparameter sensitivity analysis. We ablate the PGD adversarial radius $\epsilon$, the number of PGD steps $K$, and the contrastive margins ($m_r$, $m_c$) on the 2D Spiral dataset across both the Conditional Generation and Probabilistic Regression tasks. The robust operating ranges and their corresponding performance metrics are summarized in Table~\ref{tab:sensitivity_2d}. 

\begin{table}[h]
\centering
\caption{\textbf{Sensitivity Analysis on the 2D Spiral Manifold.} We report the robust ranges for PGD and contrastive hyperparameters. {\algo} exhibits stability across a wide spectrum of configurations, maintaining near-perfect detection without degrading baseline predictive accuracy (MSE).}
\label{tab:sensitivity_2d}
\small 
\renewcommand{\arraystretch}{0.85} 
\setlength{\tabcolsep}{4pt} 

\begin{tabular*}{\textwidth}{@{\extracolsep{\fill}}llccc}
\toprule
\algotext{Hyperparameter} & \algotext{Tested Range} & \algotext{AUROC} $\uparrow$ & \algotext{FPR (\%)} $\downarrow$ &
    \algotext{MSE ($10^{-5}$)} $\downarrow$ \\
\midrule
\multicolumn{5}{c}{\algotext{Conditional Generation Task}} \\
\midrule
\algotext{Adversarial Radius $\epsilon$} & 0.01 (Fails)   & $\sim$ 0.493 & 95.5 & --- \\
\algotext{Adversarial Radius $\epsilon$} & [0.05, 0.50] & $>$ 0.980    & $<$ 4.9 & --- \\
\algotext{PGD Steps $K$}                 & \{1, 5\}     & $>$ 0.980    & $<$ 4.8 & --- \\
\algotext{Repulsion Margin $m_r$}        & [0.30, 2.00] & $>$ 0.971    & $<$ 5.5 & --- \\
\algotext{Curvature Margin $m_c$}        & [0.01, 1.70] & $>$ 0.979    & $<$ 5.2 & --- \\
\midrule
\multicolumn{5}{c}{\algotext{Probabilistic Regression Task}} \\
\midrule
\algotext{Adversarial Radius $\epsilon$} & [0.01, 0.50] & $>$ 0.983    & $<$ 4.8 & $\le$ 8.52 \\
\algotext{PGD Steps $K$}                 & \{1, 5\}     & $>$ 0.996    & $<$ 2.7 & $\le$ 8.50 \\
\algotext{Repulsion Margin $m_r$}        & [0.50, 2.00] & $>$ 0.993    & $<$ 2.1 & $\le$ 8.53 \\
\algotext{Curvature Margin $m_c$}       & [0.10, 1.70] & $>$ 0.996    & $<$ 3.7 & $\le$ 8.53 \\
\bottomrule
\end{tabular*}
\end{table}

\paragraph{Task-Dependent Geometric Sensitivity.}
The optimal PGD projection radius $\epsilon$ depends directly on the intrinsic thickness (variance) of the underlying data manifold. In the Generation task, the spiral dataset exhibits higher spatial variance. Consequently, a microscopic perturbation ($\epsilon = 0.01$) is insufficient to escape this wider distribution; the perturbed negatives physically overlap with valid data, causing the detection mechanism to fail (AUROC $\sim 0.49$). Conversely, the dataset utilized for the Regression task possesses much lower variance, yielding a significantly "thinner" manifold. Because of this tighter distribution, that exact same $\epsilon = 0.01$ shift easily pushes samples completely off-manifold, resulting in near-perfect separation (AUROC $> 0.98$). In practice, establishing $\epsilon$ based on the natural scale of the normalized domain (e.g., defaulting to $\epsilon = 0.10$) safely clears the boundary across varying manifold thicknesses.

\paragraph{Computational Efficiency and Margin Stability.}
Notably, a single-step PGD attack ($K=1$) performs almost identically to a 5-step attack across both tasks. This proves
that the gradient of the standard Flow Matching loss, when the model is trained with the contrastive objectives, is an strong local signal, allowing {\algo} to
identify hard negatives with minimal computational training overhead. Furthermore, as long as a positive
contrastive margin is enforced, the exact values of $m_r$ and $m_c$ do not require fragile tuning. The detection
mechanism remains highly stable (maintaining AUROC $>0.97$) across broadly swept margin ranges without any major degradation
of the baseline MSE.

\subsection{Sensititivity Analysis on High Dimensional Weather Temperature Forecasting}\label{appendix:sensitivity_era5}

\textbf{Dimensionality and Geometric Sensitivity.} The sensitivity analysis on the high-dimensional ERA5 dataset (Table~\ref{tab:weather_sens}) reveals distinct geometric behaviors compared to the low-dimensional synthetic manifolds. First, we observe that the repulsion margin ($m_r$) must scale significantly. Small margins ($m_r \le 10$) fail to separate from the natural transport energy variance of a $64 \times 64$ system, reducing detection to near-random performance. Robust separation requires a massive kinetic anchor ($m_r \ge 100$) to overcome the high-dimensional smoothness bias. 

Conversely, directional divergence requires a soft penalty. Imposing aggressive curvature bounds ($\beta=5$ or
$m_c=1.8$) degrades the AUROC signal ($\sim 0.75$), whereas gentle angular regularization ($\beta=0.1, m_c=0.3$) yields
near-perfect detection ($0.99+$). In high-dimensional ODE integration, minor initial angular deviations cascade into
massive spatial divergence; over-penalizing this geometry likely disrupts the local Lipschitz continuity required for
stable solving. Finally, across all configurations, the predictive fidelity remains remarkably stable, confirming that
{\algosmall}'s dynamic boundaries govern off-manifold behavior without bleeding into the valid data distribution.

\begin{table}[h]
\centering
    \caption{\textbf{Weather Temperature Forecasting Sensitivity Analysis.} Evaluation of {\algosmall}'s hyperparameters on the high-dimensional ERA5 dataset. Robust detection requires large kinetic margins ($m_r \ge 100$) and tight adversarial boundaries ($\epsilon \le 0.05$), while directional curvature penalties ($\beta, m_c$) must remain gentle to preserve solver stability and maximize the AUROC signal.}
\label{tab:weather_sens}
\small 
\renewcommand{\arraystretch}{0.85} 
\setlength{\tabcolsep}{4pt} 
\begin{tabular*}{\textwidth}{@{\extracolsep{\fill}}lcccc}
\toprule
\algotext{Hyperparameter} & \algotext{MSE ($10^{-4}$)} $\downarrow$ & \algotext{SSIM} $\uparrow$ & \algotext{PSNR}
    $\uparrow$ & \algotext{AUROC} $\uparrow$ \\
\addlinespace
\multicolumn{5}{l}{\color{RoyalBlue}\algotext{\underline{Repulsion Weight}}} \\
\addlinespace
$\lambda = 0.1$        & 7.7 & 0.967 & 37.11 & 0.991 \\
$\lambda = 1.5$        & 7.6 & 0.968 & 37.18 & 0.913 \\
$\lambda = 5.0$        & 7.4 & 0.968 & 37.31 & 0.941 \\
\addlinespace
\multicolumn{5}{l}{\color{RoyalBlue}\algotext{\underline{Repulsion Margin}}} \\
\addlinespace
$m_\text{r} = 1$       & 7.3 & 0.969 & 37.33 & 0.718 \\
$m_\text{r} = 10$      & 7.5 & 0.969 & 37.25 & 0.668 \\
$m_\text{r} = 200$     & 7.4 & 0.968 & 37.27 & 0.949 \\
\addlinespace
\multicolumn{5}{l}{\color{RoyalBlue}\algotext{\underline{Curvature Weight}}} \\
\addlinespace
$\beta = 0.1$          & 7.7 & 0.967 & 37.11 & 0.992 \\
$\beta = 1.5$          & 7.8 & 0.968 & 37.06 & 0.898 \\
$\beta = 5.0$          & 8.8 & 0.966 & 36.93 & 0.765 \\
\addlinespace
\multicolumn{5}{l}{\color{RoyalBlue}\algotext{\underline{Curvature Margin}}} \\
\addlinespace
$m_\text{c} = 0.3$     & 7.3 & 0.969 & 37.34 & 0.977 \\
$m_\text{c} = 1.8$     & 7.7 & 0.966 & 37.11 & 0.752 \\
\addlinespace
\multicolumn{5}{l}{\color{RoyalBlue}\algotext{\underline{Adversarial Radius}}} \\
\addlinespace
\algotext{PGD} $\epsilon = 0.05$  & 8.0 & 0.965 & 36.96 & 0.914 \\
\algotext{PGD} $\epsilon = 0.8$   & 7.0 & 0.969 & 37.37 & 0.640 \\
\bottomrule
\end{tabular*}
\end{table}

\paragraph{Practical Guideline for $\epsilon$ Selection.} The adversarial radius $\epsilon$ governs the distance between
valid and mined negative conditions in the normalized input space. We observe a consistent principle across all tasks:
$\epsilon$ should be set to the natural noise scale of the normalized domain. Concretely, since all inputs are normalized to
$[-1,1]^d$, we recommend an $\epsilon \in [0.07, 0.2]$ as a default. This range safely clears the manifold boundary
across varying manifold thicknesses (as demonstrated in Table~\ref{tab:abl2d} for synthetic data) while remaining in the local neighborhood required for PGD to find informative negatives.
The failure of $\epsilon = 0.8$ on ERA5 is instructive: a perturbation of $0.8$ in the normalized space corresponds to a
physically implausible temperature shift of approximately 83K, which pushes negatives so far off-manifold that they
provide no useful boundary signal. This is analogous to using too large a learning rate — the optimization overshoots
the relevant region.

\section{Visualization of Detection Landscapes}
\label{appendix:2d_landscape}

In this section, we visualize the decision boundaries learned by the models within the 2D ambient conditioning space. Figures~\ref{fig:spiral_gen_landscape} and \ref{fig:spiral_reg_landscape} depict the landscape of accepted versus rejected conditions.

In these plots, the underlying data manifold is the 2D spiral. The background represents the full ambient space $[-1,1]^2$. We evaluate the model on a dense grid of conditions; regions highlighted in \textbf{red} indicate conditions flagged as extrapolations (i.e., inputs that trigger a high $S_\text{DOT}$ score).

The results visually confirm the silent failure hypothesis: Standard Flow Matching (left) accepts almost the entire
ambient space as valid, driven by its inherent smoothness bias. In contrast, {\algosmall} (right) effectively creates a tight
validity boundary that follows the data support, while rejecting the surrounding off-manifold regions (red).

\begin{figure}[h]
    \centering
    \begin{subfigure}[b]{0.28\textwidth}
        \centering
        \includegraphics[width=\textwidth]{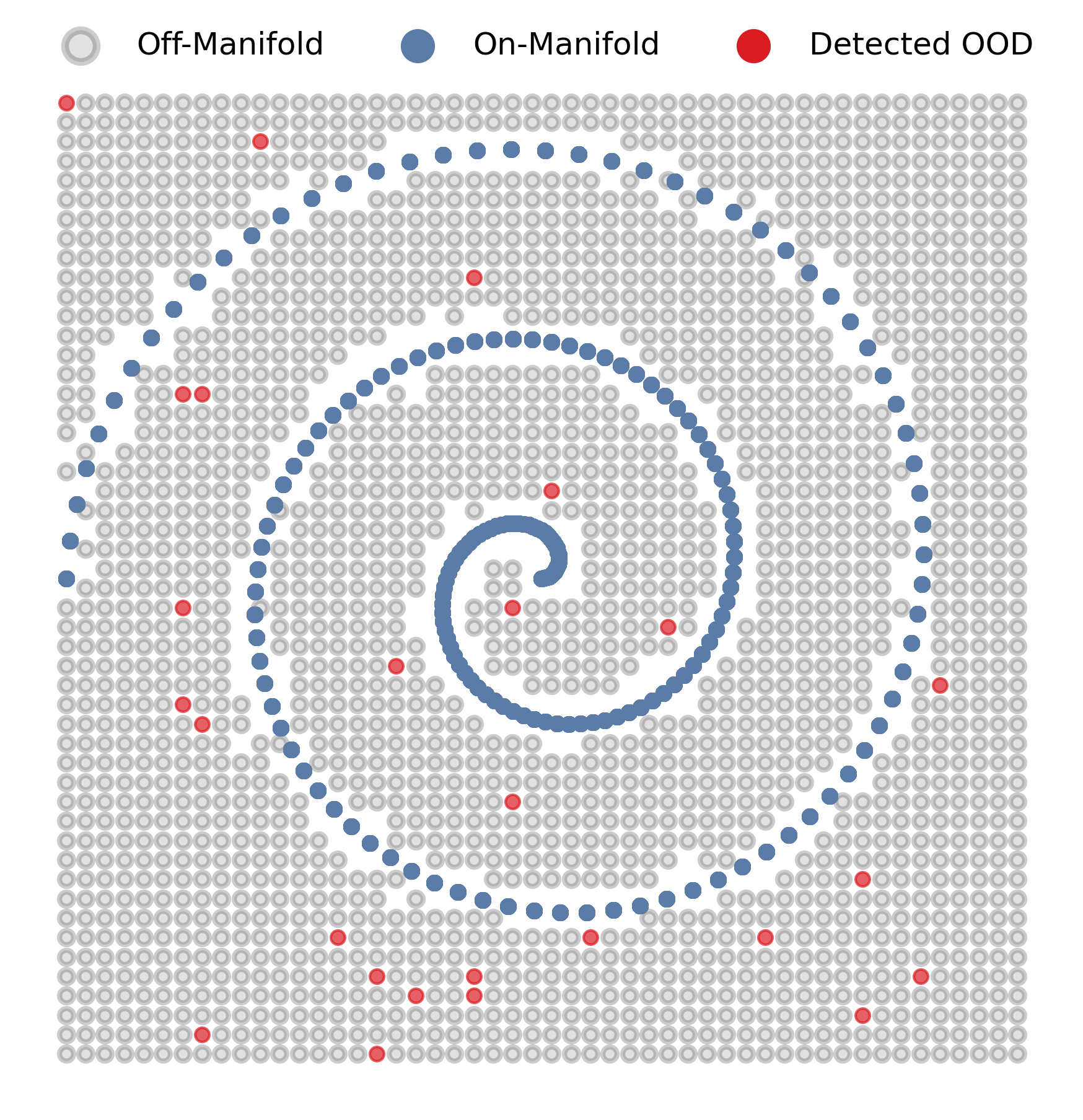}
        \caption{Flow Matching}
    \end{subfigure}
    \begin{subfigure}[b]{0.28\textwidth}
        \centering
        \includegraphics[width=\textwidth]{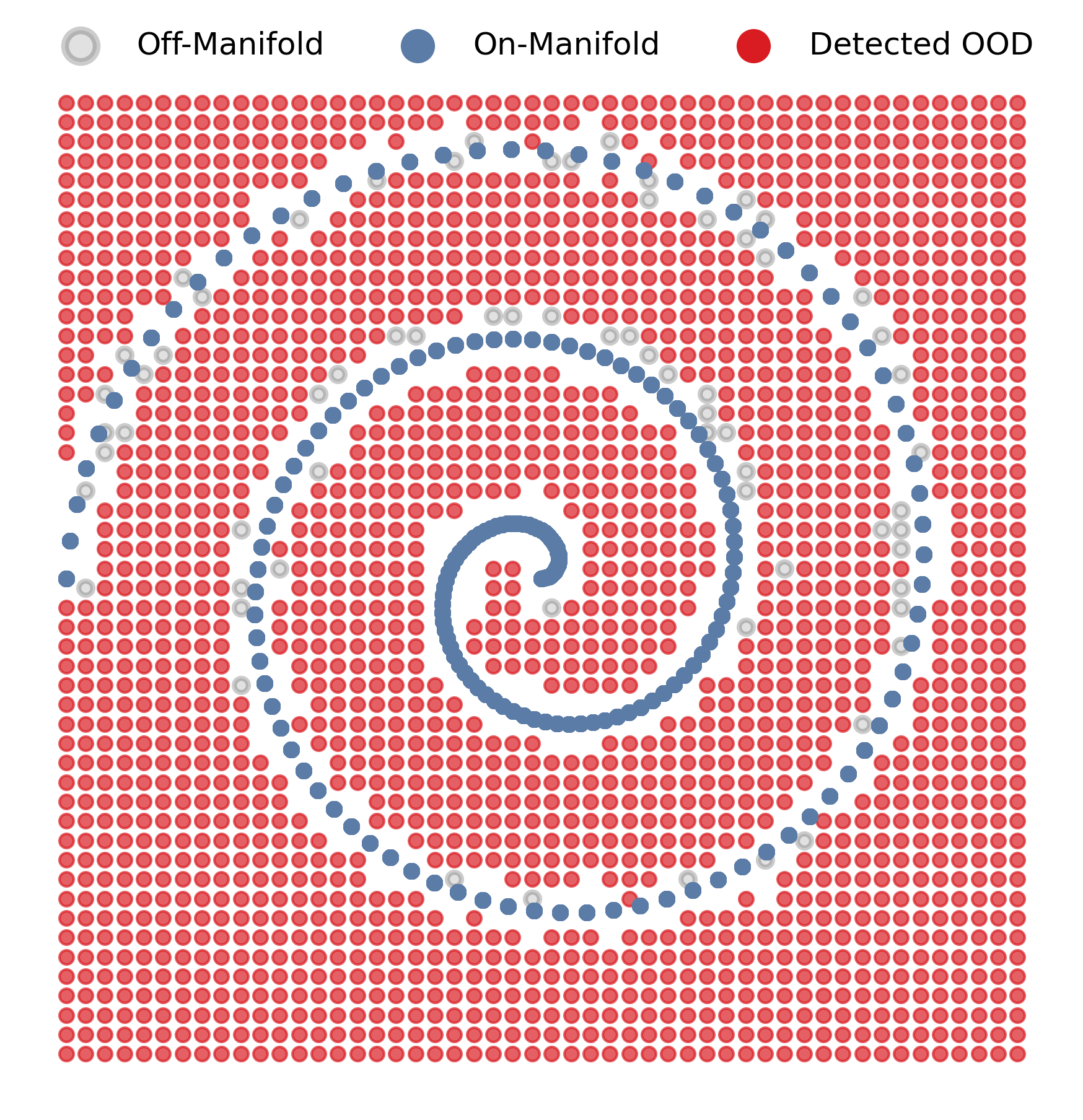}
        \caption{{\algo} (Ours)}
    \end{subfigure}
    \caption{\textbf{Detection Landscape:} Conditional Generation. The standard model (a) fails to distinguish the manifold from the background. {\algo} (b) successfully identifies the ambient space as invalid (red), accepting only the spiral support.}
    \label{fig:spiral_gen_landscape}
\end{figure}

\begin{figure}[h]
    \centering
    \begin{subfigure}[b]{0.28\textwidth}
        \centering
        \includegraphics[width=\textwidth]{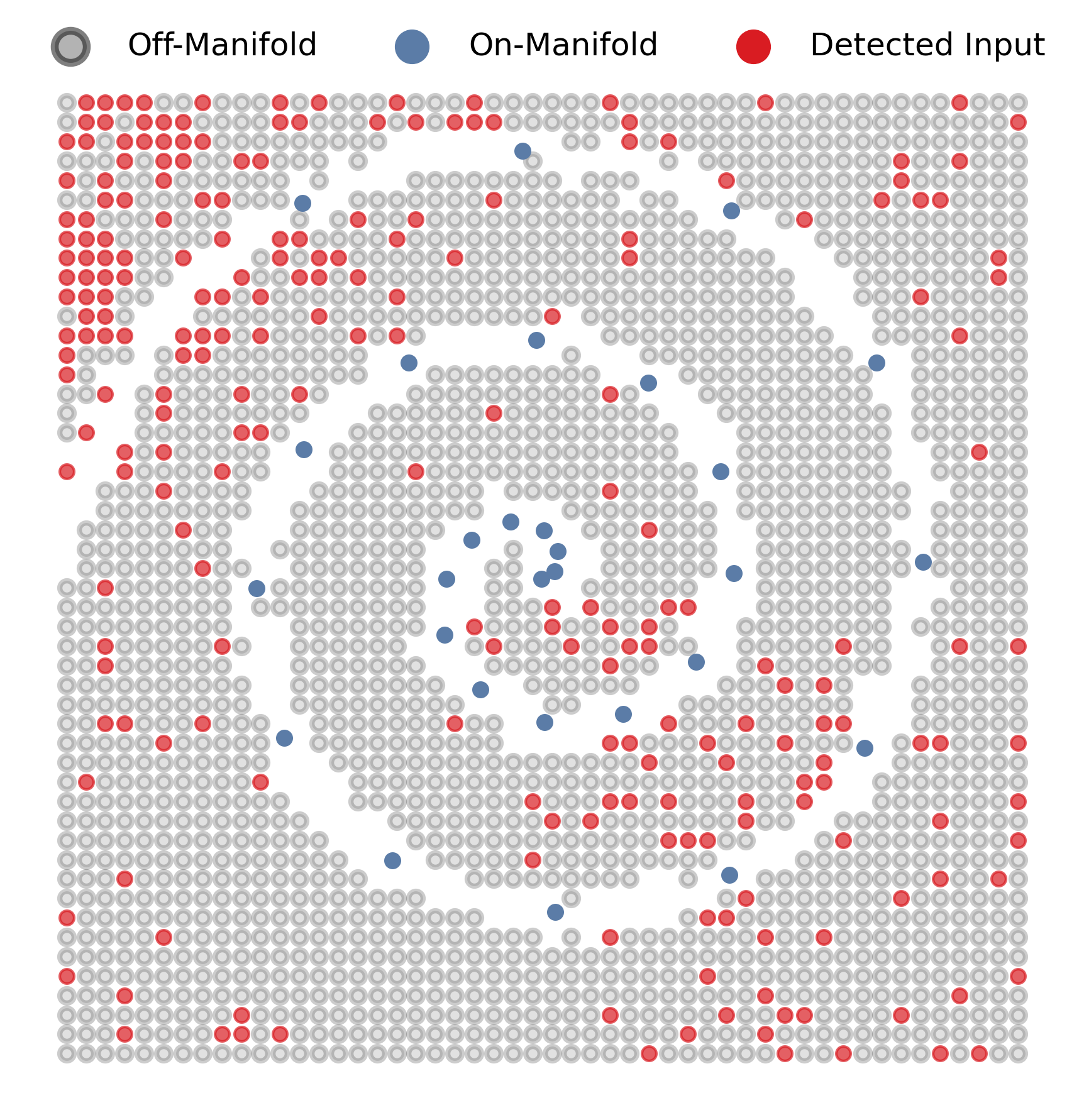}
        \caption{Flow Matching}
    \end{subfigure}
    \begin{subfigure}[b]{0.28\textwidth}
        \centering
        \includegraphics[width=\textwidth]{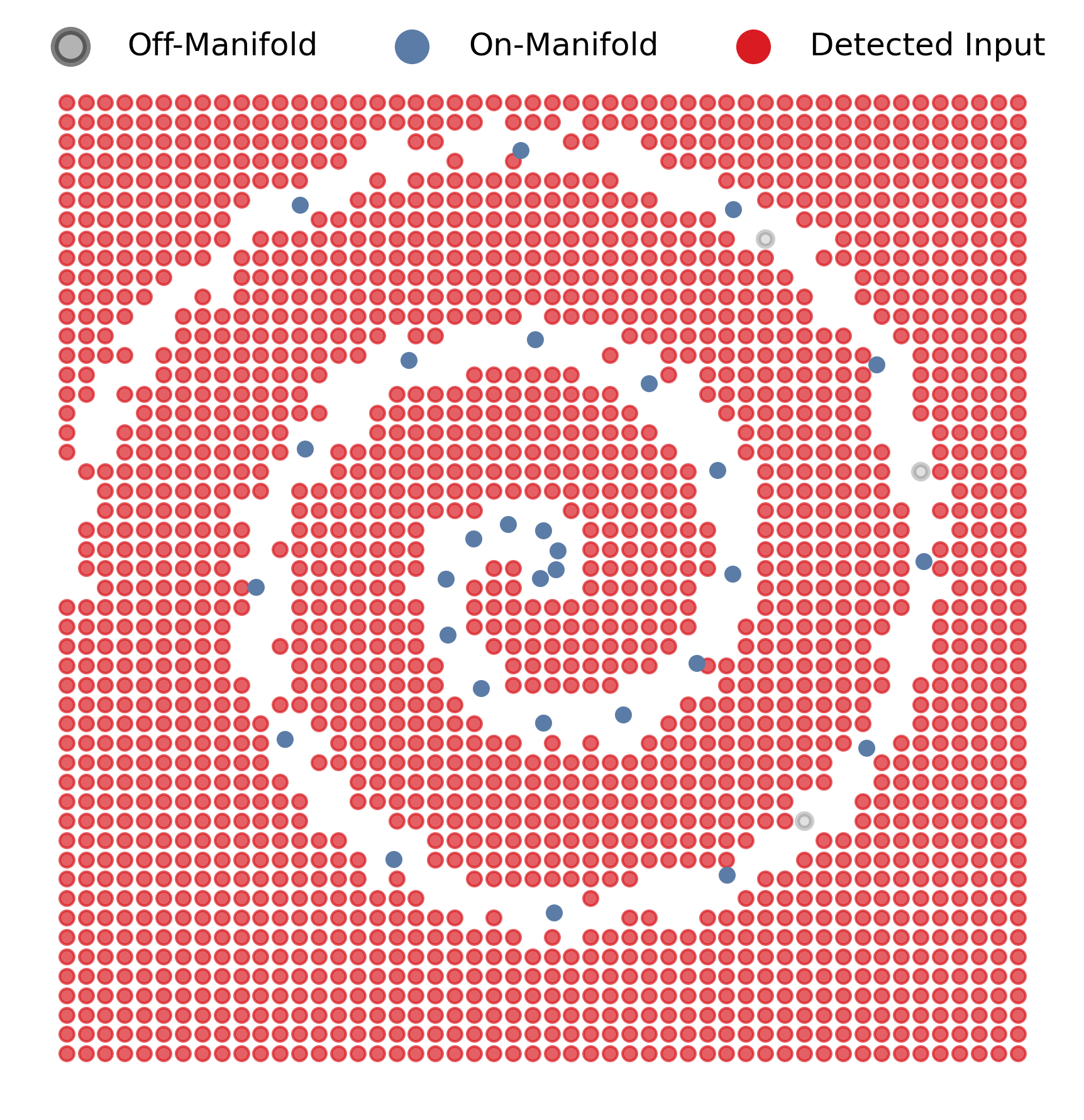}
        \caption{{\algo} (Ours)}
    \end{subfigure}
    \caption{\textbf{Detection Landscape:} Probabilistic Regression. Even in the regression task, where the condition represents a dynamic state $x_t$, {\algo} (b) maintains a precise decision boundary around the valid trajectory path.}
    \label{fig:spiral_reg_landscape}
\end{figure}

\section{Comparison with DiffPath}
\label{appendix:diffpath}

For completeness, we provide a reference comparison against DiffPath~\citep{heng2024out}, a method that analyzes reverse generative trajectories for OOD detection. Unlike {\algo}, which actively regularizes the underlying vector field during training, DiffPath operates as a passive, post-hoc detector on a frozen model.

Because DiffPath requires a pre-trained generative model, and no off-the-shelf models exist for our specific scientific
datasets, we trained the diffusion models from scratch on our exact in-distribution data. DiffPath utilizes this model to perform the reverse generation process on a hold-out set of ID calibration samples. It extracts various discrete trajectory statistics and fits a statistical density estimator (e.g., Gaussian Mixture Model) to these features. At inference time, the validity of a query input is determined by the likelihood of its trajectory statistics under this fitted distribution.

Table~\ref{tab:diffpath_ablation} presents the comparative detection performance. While DiffPath-6D achieves marginally higher AUROC scores, it relies on the assumption that the frozen model will implicitly produce distinguishable trajectory features for OOD inputs. In contrast, {\algo} embeds safety directly into the dynamics, mathematically enforcing geometric divergence. We observe that our active training approach yields detection capabilities highly comparable to this expensive post-hoc analysis.

\begin{table}[h!]
\centering
\caption{\textbf{Comparison with DiffPath.} We report detection performance (AUROC) against DiffPath, a post-hoc
    trajectory auditor. {\algosmall} (active regularization) achieves highly competitive detection capabilities directly within the generative dynamics.}
\label{tab:diffpath_ablation}
\small 
\renewcommand{\arraystretch}{0.85} 
\setlength{\tabcolsep}{4pt} 

\begin{tabular*}{\textwidth}{@{\extracolsep{\fill}}lcc}
\toprule
\algotext{Method} & \algotext{Temp. Forecast} $\uparrow$ & \algotext{Style Transfer (FMNIST / KMNIST)} $\uparrow$ \\
\midrule
\multicolumn{3}{l}{\color{RoyalBlue}{\algotext{Passive Post-Hoc Statistics}}} \\
    \algotext{DiffPath-1D} & 0.973 & 0.981 / 0.863 \\
    \algotext{DiffPath-6D} & 0.993 & 0.991 / 0.929 \\
\midrule
\multicolumn{3}{l}{\color{RoyalBlue}{\algotext{Active Generative Regularization}}} \\
\algotext{\algosmall (Transforms)} & 0.733 & 0.987 / 0.892 \\
\algotext{\algosmall (PGD)} & 0.991 & 0.955 / 0.860 \\
\bottomrule
\end{tabular*}
\end{table}

\paragraph{Remark on Flow vs. Diffusion Dynamics.} It is critical to note that DiffPath is designed for Diffusion Models, which solve a stochastic differential equation (SDE) or its curved probability flow ODE equivalent. In that regime, the learned vector fields naturally exhibit complex, non-linear trajectories where OOD inputs \emph{might} implicitly induce detectable erratic behavior or jitter. However, deploying such passive detectors on Flow Matching models is fundamentally challenging. As established in Section~\ref{sec:extrapolation_problem}, standard Optimal Transport Flow Matching explicitly minimizes transport energy, forcing valid trajectories to be straight-line geodesics. Due to the strong Lipschitz continuity of the neural network, unregularized FM models collapse OOD trajectories into efficient, straight paths that are geometrically indistinguishable from valid data. Consequently, passive monitoring of path statistics is frequently insufficient for Flow Models, necessitating the active geometric regularization introduced in {\algo} to explicitly shatter this smoothness bias and enforce divergence.

\section{Experiment Details}\label{appendix:exp_details}

\subsection{Theoretical Formulation of Detection Baselines}
\label{appendix:baselines}

We compare {\algo} against two intrinsic baselines: (i) a likelihood proxy 
derived from the learned vector field, and (ii) ensemble predictive variance.

\subsubsection{Flow-Matching Likelihood}

A natural extrapolation signal is whether the model assigns low marginal probability 
to the queried condition. Although Flow Matching optimizes a transport-based objective 
rather than a direct likelihood bound, the learned vector field $v_t^\theta$ governs a 
probability flow ODE that induces a tractable density over the state space.

Let the state $x_t$ evolve according to $\frac{dx_t}{dt} = v_t^\theta(x_t, c)$. By the 
continuity equation, the log-density evolves along the trajectory as \cite{lipman2024flow}:
\begin{equation}
    \frac{d}{dt} \log p_t(x_t|c) = -\nabla_x \cdot v_t^\theta(x_t, c)
\end{equation}
Since the divergence of the vector field equals the trace of its Jacobian, integrating 
from $t=0$ to $t=1$ yields the implied log-likelihood:
\begin{equation}
    \log p_1(x_1|c) = \log p_0(x_0) - \int_0^1 \text{Tr}\left(\nabla_x v_t^\theta(x_t, 
    c)\right) dt
\end{equation}
Computing the exact Jacobian trace requires $\mathcal{O}(d)$ backward passes, so we 
instead approximate it with the Hutchinson estimator. Drawing a probe vector 
$z \sim \mathcal{N}(0, I)$ gives an unbiased stochastic estimate:
\begin{equation}
    \log p_1(x_1|c) \approx \log p_0(x_0) - \int_0^1 z^\top \nabla_x v_t^\theta(x_t, 
    c)\, z \, dt
\end{equation}
The Jacobian-vector product $z^\top \nabla_x v_t^\theta$ is computed via reverse-mode 
automatic differentiation, integrated jointly with the state ODE. The extrapolation score 
is then defined as the negative proxy log-density, $S_{\text{div}}(c) = -\log p_1(x_1|c)$, 
where higher values indicate low compatibility under the model's learned marginal dynamics.

\subsubsection{Ensemble Predictive Variance}

We additionally evaluate Deep Ensembles as a standard epistemic uncertainty baseline. 
We independently train $K=3$ Flow Matching models $\{\theta_k\}_{k=1}^K$ under identical 
hyperparameters but distinct random initializations.

To decouple epistemic uncertainty from aleatoric stochasticity, we evaluate each model's 
deterministic conditional expectation $\hat{x}_1^{(k)}$ under condition $c$, and define 
the extrapolation score as the empirical spatial variance across predictions:
\begin{equation}
    S_{\text{Ens}}(c) = \frac{1}{K} \sum_{k=1}^K \|\hat{x}_1^{(k)} - \mu_c\|_2^2, 
    \quad \mu_c = \frac{1}{K} \sum_{k=1}^K \hat{x}_1^{(k)}
\end{equation}
For valid conditions, the independently trained models converge to consistent predictive 
flows, yielding low variance. For off-manifold queries, the models disagree, inflating 
$S_{\text{Ens}}$. The critical limitation of this approach is computational: detection 
requires $K$ independent ODE solves at inference time, imposing an $\mathcal{O}(K)$ 
overhead that is prohibitive in real-time deployments.
\subsection{General Experimental Setup}
Across all evaluated tasks, experiments were conducted using a single NVIDIA RTX 4090 GPU. During optimization, we applied a learning rate schedule consisting of a 500-step linear warmup from $10^{-8}$ to the base learning rate, followed by a cosine decay. Prior to training, all continuous input data across tasks was normalized to the $[-1, 1]$ range to ensure stable vector field regression and consistent application of the $L_\infty$ adversarial bounds. To ensure statistically rigorous evaluation and properly calibrated detection boundaries, all datasets were partitioned using a fixed, unified split ratio: 70\% utilized for model training, 10\% reserved as a hold-out calibration set for Split Conformal Prediction, and 20\% utilized for final testing, if a test set is not provided.

\subsection{Synthetic Manifold Experiment Details}
\label{appendix:synthetic}

\textbf{Data Generation.} We utilize the 2D spiral and ambient OOD setup as formally defined in
Section~\ref{sec:spiral_sec}. We enforce a distance of 0.025 between the manifold support and the OOD sampling region to ensure the detection task is unambiguous.

\textbf{Model Architecture.} We employ a time-conditioned Multi-Layer Perceptron (MLP) that processes inputs as flat vectors. Specifically, the \textbf{input} is a direct concatenation of the noisy state ($x_t \in \mathbb{R}^2$ in the conditional generation task or $x_t \in \mathbb{R}^{20}$ in the regression task), the condition $c \in \mathbb{R}^2$, and the time scalar $t \in [0,1]$. The \textbf{backbone} consists of three hidden layers of width 512 using SiLU activations, followed by a linear \textbf{output} projection to $\mathbb{R}^2$ predicting the continuous velocity field $u_t$.

\textbf{Training Setup.} The model is trained using the AdamW optimizer for 400,000 training steps in the generation task and 200,000 training steps in the regression task. Detailed hyperparameters for the architecture and the contrastive loss functions are listed in Table~\ref{tab:synthetic_hyper_combined}.

\begin{table}[h]
\centering
\caption{\textbf{Hyperparameters for Synthetic Manifold Experiments.}}
\label{tab:synthetic_hyper_combined}
\small 
\renewcommand{\arraystretch}{0.85} 
\setlength{\tabcolsep}{4pt} 

\begin{tabular*}{\textwidth}{@{\extracolsep{\fill}}lcc}
\toprule
\algotext{Hyperparameter} & \algotext{Conditional Generation} & \algotext{Probabilistic Regression} \\
\midrule
\algotext{Hidden Layers} & 3 & 3 \\
\algotext{Hidden Dimension} & 512 & 512 \\
\algotext{Activation} & \algotext{SiLU} & \algotext{SiLU} \\
\midrule
\algotext{Optimizer} & \algotext{AdamW} & \algotext{AdamW} \\
\algotext{Learning Rate} & \ttp{3}{-4} & \ttp{3}{-4} \\
\algotext{Weight Decay} & \ttp{1}{-3} & \ttp{1}{-3} \\
\algotext{Batch Size} & 256 & 256 \\
\algotext{Training Steps} & 400,000 & 200,000 \\
\midrule
\algotext{PGD Steps ($K$)} & 3 & 3 \\
\algotext{Perturbation ($\epsilon$)} & 0.1 & 0.1 \\
\algotext{Repulsion Weight ($\lambda$)} & 0.1 & 0.1 \\
\algotext{Curvature Weight ($\beta$)} & 0.05 & 0.1 \\
\algotext{Margins ($m_r$ / $m_c$)} & 1.0 / 0.9 & 1.0 / 0.9 \\
\midrule
\algotext{Integrator} & \algotext{Euler} & \algotext{Euler} \\
\algotext{Integration Steps} & 50 & 50 \\
\algotext{Significance Level} ($\alpha$) & 0.05 & 0.05 \\
\algotext{Target Coverage} & 95\%  & 95\% \\
\bottomrule
\end{tabular*}
\end{table}

\paragraph{Note on the Spiral Benchmark.} The 2D spiral is a canonical benchmark in the study of Neural ODEs and continuous-time dynamical systems \citep{chen2018neural}. Unlike simple shapes, a spiral represents a stiff geometric structure where the optimal transport path (a straight line) often intersects regions that are off-manifold. In the context of probabilistic regression, this experiment is critical because it validates that the learned vector field captures the true non-linear differential equation governing the system's evolution, rather than memorizing discrete point-to-point mappings. Successfully modeling this dynamics while simultaneously detecting off-manifold perturbations demonstrates that {\algo} can enforce safety constraints without over-simplifying the complex curvature of the underlying data manifold.

\subsection{Weather Temperature Forecasting}\label{appendix:weather}

\textbf{Dataset.} We demonstrate the applicability of {\algo} to climate science via two-meter surface temperature
forecasting using the ERA5 reanalysis dataset \citep{era5paper}. To formulate the predictive task, we construct
conditional pairs $(c, x_1)$, where the condition $c$ represents the historical temperature map at time $t$ and the
target $x_1$ is the realized state at $t + 6$ hours. The data is sampled at standard six-hour synoptic intervals (00,
06, 12, 18 UTC). Physically, the global temperature limits (spanning roughly $210$K to $313$K) are normalized
to the $[-1, 1]$ interval.

\textbf{Model Architecture.} We employ a U-Net backbone modified for conditional generation. For \textbf{time conditioning}, the time step $t \in [0,1]$ is embedded using Fourier features followed by an MLP to produce a 128-dimensional embedding, which is injected into every residual block via FiLM layers. The \textbf{condition encoder} processes the conditioning input $c$ (current state) through a separate learnable network consisting of a convolution, two residual blocks, and one downsampling operation. Finally, for \textbf{cross-attention injection}, the latent representation from the condition encoder is projected to match the time embedding dimension and injected into the U-Net backbone via cross-attention layers at the $16\times16$ resolution (lowest level).

\textbf{Training Setup.} We train the model for 50,000 training steps with a batch size of 64. To ensure stability, we use gradient clipping (norm 1.0) and an Exponential Moving Average (EMA) of model weights with decay $0.9999$.

\begin{table}[h]
\centering
\caption{\textbf{Hyperparameters used in the Weather Temperature Forecasting (ERA5) experiment.}}
\label{tab:weather_hyper}
\small 
\renewcommand{\arraystretch}{0.85} 
\setlength{\tabcolsep}{4pt} 

\begin{tabular*}{\textwidth}{@{\extracolsep{\fill}}lc}
\toprule
\algotext{Hyperparameter} & \algotext{Value} \\
\midrule
\algotext{Input Resolution} & $64 \times 64$ \\
\algotext{Channels} & [64, 64, 128] \\
\algotext{ResBlocks per Scale} & 2 \\
\algotext{Attention Resolution} & $16 \times 16$ \\
\algotext{Conditioning} & \algotext{Cross-Attention} \\
\algotext{Time Embedding Dim} & 128 \\
\midrule
\algotext{Optimizer} & \algotext{AdamW} \\
\algotext{Learning Rate} & \ttp{3}{-4} \\
\algotext{Weight Decay} & \ttp{1}{-2} \\
\algotext{AdamW Betas} & [0.9, 0.95] \\
\algotext{Batch Size} & 64 \\
\algotext{Training Steps} & 50,000 \\
\algotext{EMA Decay} & 0.9999 \\
\algotext{Gradient Clip} & 1.0 \\
\midrule
\algotext{PGD Steps ($K$)} & 5 \\
\algotext{Perturbation ($\epsilon$)} & 0.1 \\
\algotext{Repulsion Weight ($\lambda$)} & 0.7 \\
\algotext{Curvature Weight ($\beta$)} & 0.9 \\
\algotext{Margin $m_\text{r}$} & 100.0 \\
\algotext{Margin $m_\text{c}$} & 0.9 \\
\midrule
\algotext{Integrator} & \algotext{Euler} \\
\algotext{Integration Steps ($N$)} & 50 \\
\algotext{Significance Level ($\alpha$)} & 0.05 \\
\algotext{Target Coverage} & 95\% \\
\bottomrule
\end{tabular*}
\end{table}

\subsection{Cross-Domain Style Transfer Details}\label{appendix:style_transfer}

\textbf{Dataset.} We perform a cross-domain translation task mapping grayscale MNIST digits ($1\times32\times32$) to RGB SVHN digits ($3\times32\times32$). 

\textbf{Model Architecture.} We use a U-Net backbone similar to the weather temperature forecasting experiment, but with a stronger dual-conditioning mechanism to preserve high-frequency structural details. For \textbf{input concatenation}, we explicitly concatenate the 1-channel conditioning image $c$ with the 3-channel noisy state $x_t$ at the model input, providing strong spatial guidance for the digit structure. Additionally, $c$ is processed by a shallow condition encoder (1 ResBlock) and injected via \textbf{cross-attention} at the $16\times16$ and $8\times8$ resolutions. The \textbf{backbone} utilizes channel widths of [64, 64, 128] with attention layers at the two lower resolutions.

\textbf{Training \& Inference.} The model is trained for 500,000 training steps using AdamW with a learning rate decay
schedule. For inference, we perform $N=50$ steps.

\begin{table}[h]
\centering
\caption{\textbf{Hyperparameters used in the Style Transfer (MNIST $\to$ SVHN) experiment.}}
\label{tab:style_hyper}
\small 
\renewcommand{\arraystretch}{0.85} 
\setlength{\tabcolsep}{4pt} 

\begin{tabular*}{\textwidth}{@{\extracolsep{\fill}}lc}
\toprule
\algotext{Hyperparameter} & \algotext{Value} \\
\midrule
\algotext{Input Resolution} & $32 \times 32$ \\
\algotext{Channels} & [64, 64, 128] \\
\algotext{ResBlocks per Scale} & 2 \\
\algotext{Attention Resolutions} & $16 \times 16, 8 \times 8$ \\
\algotext{Conditioning} & \algotext{Concat + Cross-Attention} \\
\midrule
\algotext{Optimizer} & \algotext{AdamW} \\
\algotext{Learning Rate} & \ttp{1}{-4} \\
\algotext{Weight Decay} & \ttp{1}{-3} \\
\algotext{AdamW Betas} & [0.9, 0.95] \\
\algotext{Accumulation Steps} & 4 \\
\algotext{Training Steps} & 500,000 \\
\algotext{Batch Size} & 128 \\
\algotext{EMA Decay} & 0.9999 \\
\midrule
\algotext{PGD Steps ($K$)} & 5 \\
\algotext{Perturbation ($\epsilon$)} & 0.2 \\
\algotext{Repulsion Weight ($\lambda$)} & 0.1 \\
\algotext{Curvature Weight ($\beta$)} & 0.7 \\
\algotext{Margin $m_\text{r}$} & 10.0 \\
\algotext{Margin $m_\text{c}$} & 1.3 \\
\midrule
\algotext{Integrator} & \algotext{Euler} \\
\algotext{Integration Steps ($N$)} & 50 \\
\algotext{Significance Level ($\alpha$)} & 0.05 \\
\algotext{Target Coverage} & 95\% \\
\bottomrule
\end{tabular*}
\end{table}

\section{Additional Qualitative Results}
\label{app:qualitative}

In this section, we provide weather temperature forecasting (Figure ~\ref{fig:app_weather}) and style transfer (Figure ~\ref{fig:app_style}) samples from {\algo}. 

\begin{figure}[h!]
    \begin{center}
        \includegraphics[width=0.80\textwidth]{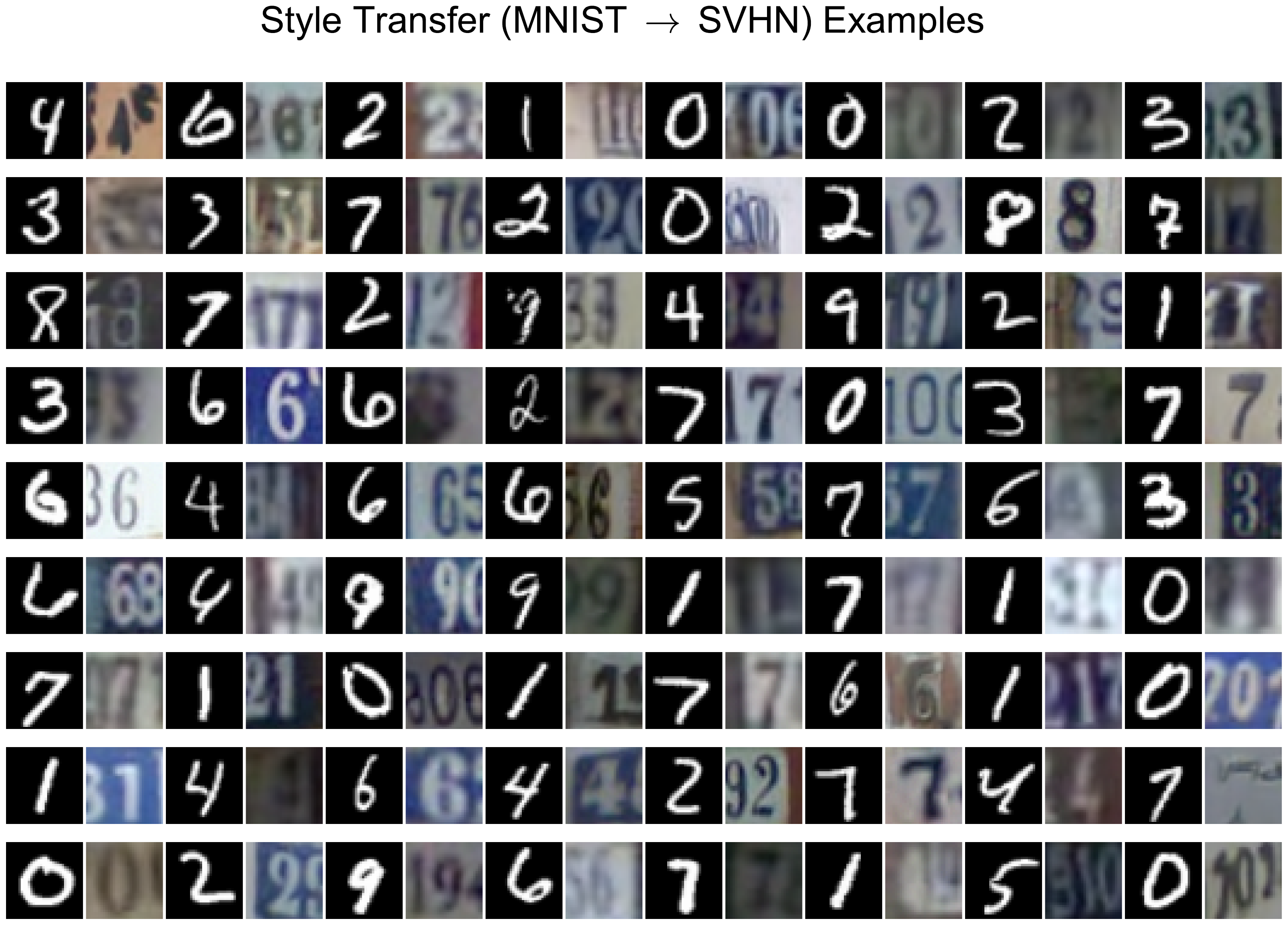}
    \end{center}
    \caption{\textbf{Extended Style Transfer Samples.} Pairs of Conditioning Inputs (MNIST) and Generated Outputs
    (SVHN). The model successfully maps the sparse structural guidance of the grayscale digit to the rich, multi-modal
    distribution of street view imagery.}
    \label{fig:app_style}
\end{figure}

\begin{figure}[H]
    \begin{center}
        \includegraphics[width=0.95\textwidth]{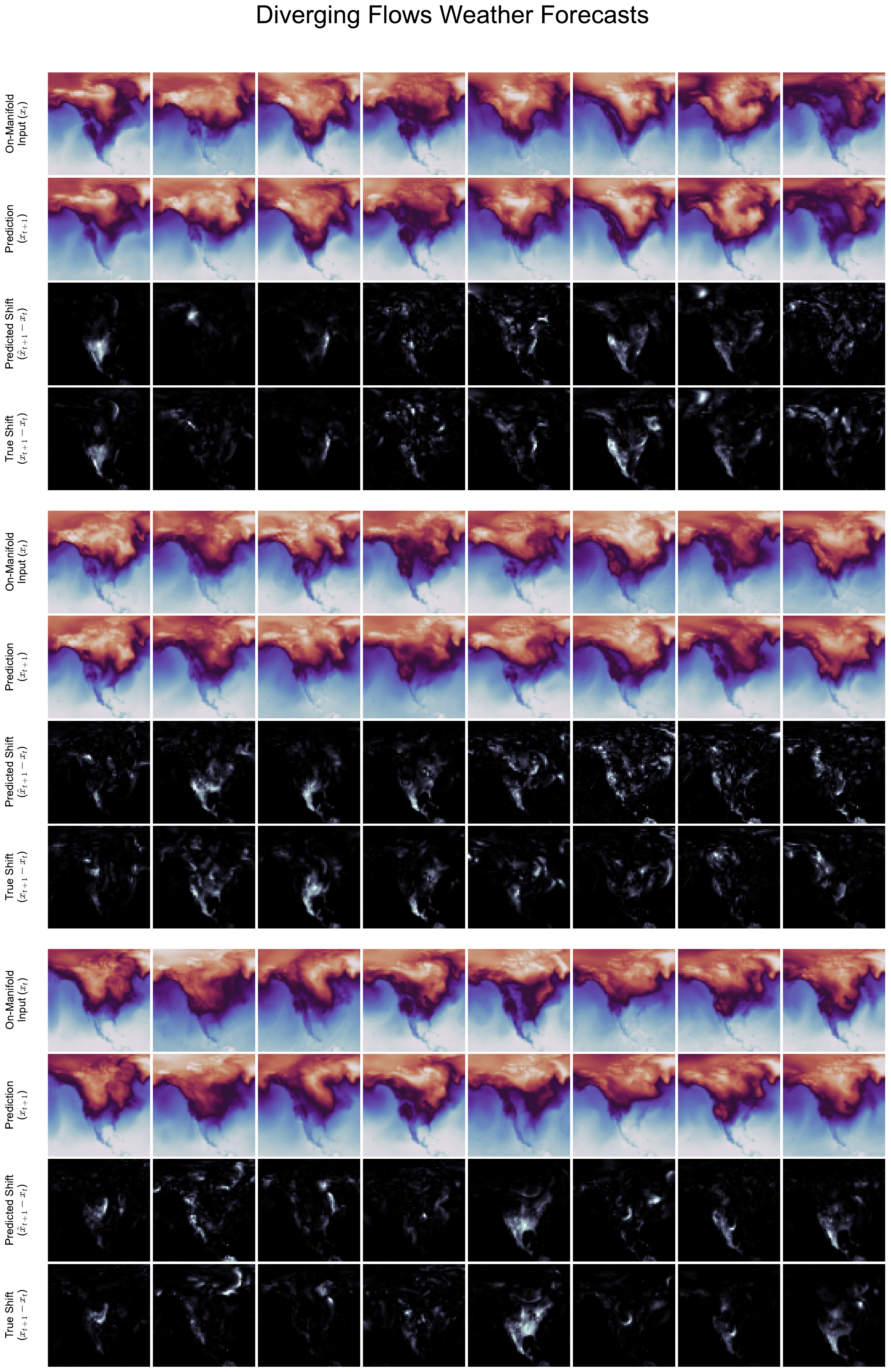}
    \end{center}
    \caption{\textbf{Extended Weather Temperature Forecasting Samples.} Randomly selected 6-hour temperature forecasts generated by {\algo} on the ERA5 validation set. The model generates physically consistent heatmaps that align with the ground truth dynamics.}
    \label{fig:app_weather}
\end{figure}



\end{document}